\documentclass{article}

\usepackage{iclr2027_conference,times}
\newif\ifarxiv
\arxivtrue
\ifarxiv\iclrfinalcopy\fi
\usepackage[T1]{fontenc}
\usepackage{amsmath,amssymb,amsthm}
\usepackage{booktabs}
\usepackage{longtable}
\usepackage{graphicx}
\usepackage{microtype}
\usepackage{xcolor}
\usepackage{array}
\usepackage{url}
\usepackage[colorlinks=true,linkcolor=blue!55!black,
            citecolor=blue!55!black,urlcolor=blue!55!black]{hyperref}

\newtheorem{lemma}{Lemma}

\newcommand{\pp}{\,\mathrm{pp}}
\newcommand{\task}[1]{\textsc{\MakeLowercase{#1}}}
\newcommand{\papertitle}{A Helps B While B Hurts A: Directed Transfer in Instruction-Tuning Mixtures}
\ifarxiv\hypersetup{pdftitle={\papertitle},pdfauthor={Nima H. Siboni, Vahid Rostami}}\fi

\title{\papertitle}
\ifarxiv
\author{Nima H. Siboni$^{1,2}$ \hspace{2em} Vahid Rostami$^{2,*}$ \\[5pt]
{\small $^{1}$Juna.ai, Kastanienallee 32, 10435 Berlin, Germany} \\
{\small $^{2}$Computational Systems Neuroscience, Institute of Zoology, University of Cologne, Germany} \\
{\fontsize{8}{10}\selectfont $^{*}$Corresponding author: \texttt{vrostami [at] uni-koeln [dot] de}}}
\else
\author{Anonymous authors}
\fi

\begin{document}
\maketitle
\ifarxiv\lhead{Preprint}\fi

\begin{abstract}
    Adapting a language model to a specialized corpus means choosing
    which instruction-tuning tasks to train on under a fixed budget, and 
    testing one choice costs a fine-tuning run. Common heuristics add more source tasks or pick sources similar to the target. 
    The first assumes transfer is never negative; the second, that it is symmetric. 
    We show that both assumptions fail: task $A$ can help task $B$ while $B$ hurts $A$, so
    helpfulness is a signed property of \emph{ordered} source--target pairs. We introduce the \emph{transfer
    map}, a signed estimate of how much each source helps or hurts each
    held-out target. We fit the map in hundreds of fine-tuning runs on Qwen3 and Mistral models from 0.6B to 32B parameters, 
    with all sources drawn from one corpus and no training examples from the target. 
    The map predicts a held-out target's accuracy on unseen mixtures: recorded before those runs, its predictions
    have less than half the error of a mixture-agnostic baseline.
    The map is specific to its target and corpus but transfers across
    model scale: a mixture selected in advance at one
    size beats training on all source tasks at every other size we tested.
    Transfer is thus a property of the data. The map selects the tasks
    that help and drops the one that interferes: accuracy on the reasoning
    targets (causal explanation, multi-hop questions and methodological
    critique) rises by up to 14 percentage points over training on all
    source tasks.
\end{abstract}

\section{Introduction}
\label{sec:intro}

Domain adaptation of language models increasingly relies on synthetic supervision
\citep{nayak2024bonito,yang2025synthetic,li2025scilitllm}. A generation pipeline converts each passage
of a domain corpus into instruction--response pairs of several task types, such as
definitions or multi-hop questions, and the model trains on all of them together
\citep{cheng2024instructpt,cheng2024adaptllm}.
Under a fixed example budget, every task type added displaces examples of the others.
The practitioner must therefore choose which task types to train on (the \emph{sources})
to serve the task whose accuracy matters (the \emph{target}). Testing a candidate mixture
takes a fine-tuning run, and seven source types already give $127$ mixtures.

Practitioners make this choice with one of two heuristics, each resting on
an assumption about transfer. The \emph{count heuristic}
trains on every task type the pipeline produces, following evidence that
more tasks improve held-out performance
\citep{wei2022flan,wang2022supernaturalinstructions,chung2024scaling}. It
assumes that no task type hurts the target, yet negative transfer between
fine-tuning tasks is common
\citep{aribandi2022ext5,jang2023exploring,dong2024abilities}, and supervised
fine-tuning on math reasoning often erodes general capabilities
\citep{huan2026transferability}. Multi-domain
reinforcement learning of reasoning models shows a similar interference
\citep{yang2026perturbation,mcpo2026,onestep2026}. The
\emph{similarity heuristic} trains on the sources (or examples) most similar
to the target, with similarity measured on instructions
\citep{lee2024instructionmatters} or sparse-autoencoder features
\citep{ma2025monosemantic}. A similarity score is the same in both
directions, so this heuristic assumes that transfer is symmetric.

Both assumptions fail between task types generated from one corpus. Replacing
an average source with methodological critiques (\task{ERR}) lowers multi-hop
(\task{MHOP}) accuracy by $9$ to $12$ percentage points (pp) at every Qwen3 size
and on Mistral-24B (Section~\ref{sec:scope}). Transfer also has a direction: on
Qwen3-32B, replacing an average source with definitions (\task{DEF}) raises
\task{MHOP} accuracy by $5.3\pp$, while adding \task{MHOP} to a mixture lowers
\task{DEF} accuracy by $1.9\pp$ in absolute terms (Section~\ref{sec:asym}).
Transfer can be negative, and where its directions differ, no symmetric score
can represent it.

\begin{figure}[t]
  \centering
  \includegraphics[width=\textwidth]{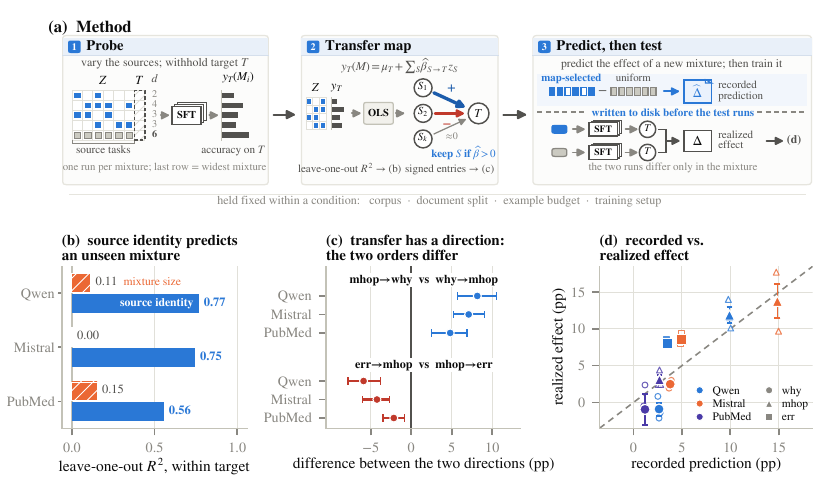}
  \caption{\textbf{Probe, map, predict.} \textbf{(a)} Balanced mixtures over
  one corpus, each withholding a target, give a signed, directed map whose
  predictions for unseen mixtures are recorded before the runs that test them.
  \textbf{(b)} Source identity predicts an unseen mixture and mixture size
  predicts little (within-target $R^2$, Table~\ref{tab:cv}).
  \textbf{(c)} The two directions of a task pair differ. \textbf{(d)}
  Recorded predictions against realized effects in the eight cells of
  Section~\ref{sec:predict}: filled markers are three-seed means ($\pm1$
  standard error), open markers single seeds.}
  \label{fig:overview}
\end{figure}

We ask five questions, and the contributions below answer them in order.
\begin{enumerate}
\itemsep0pt\parskip0pt\topsep2pt
\item \textbf{Does transfer between task types keep a sign and a direction when
the corpus is held fixed?} Sign reversals appear between separate datasets
\citep{kim2023taskweb,krishna2025latent}, where task type and content vary
together and the dataset matters more than the task category
\citep{dataprophet2026}; we measure inside fixed-budget mixtures
(Section~\ref{sec:asym}).
\item \textbf{When accuracy rises with the number of tasks, is it the number
that helps, or the average added task?} The evidence behind the count heuristic
varies both together.
\item \textbf{How much of a target's accuracy is at stake in the choice of
sources?} Selected subsets can beat the full set
\citep{kim2023taskweb,lee2024instructionmatters,renduchintala2024smart}; we ask
whether one source can cancel the benefit of the rest.
\item \textbf{Can a few runs measure transfer well enough to predict, before
training, an unseen mixture's accuracy on a target that has no training
examples?} Regressions over sampled mixtures already predict unseen ones for
pretraining domains \citep{liu2025regmix,ye2024mixinglaws} or for a target
trained alongside its sources (\citealp{li2023identification};
\citealp{li2025sftmix}).
\item \textbf{Does the measurement belong to the data or to the model?}
Data choices made with one model carry to other models in some studies
\citep{xie2023doremi,liu2025regmix,dymkiewicz2025donors} but only partly in
others \citep{ado2024,spice2026}.
\end{enumerate}

To answer them, we measure transfer instead of assuming it. The \emph{transfer
map} gives the signed effect of each source on each target
(Figure~\ref{fig:overview}a). We estimate it from a \emph{probe}, a set of
mixtures that each split a fixed budget equally among their sources and
withhold the target, as the held-out-task evaluations behind the count
heuristic do \citep{wei2022flan,wang2022supernaturalinstructions}, so the
target's accuracy measures transfer from the sources. The estimator is
deliberately simple, a linear datamodel over task types
\citep{ilyas2022datamodels,li2023identification}; what is new is the
controlled setting in which we fit it and what it reveals. We generate all eight task types from one neuroscience corpus.
Three of them, causal explanations (\task{WHY}), multi-hop questions (\task{MHOP}) and methodological critiques (\task{ERR}), are our reasoning targets. Without fine-tuning, our
two largest models score below $22\%$ on them, against about $79\%$ on
single-answer multiple choice, so they are the tasks adaptation is for. A
corpus of cancer-immunotherapy abstracts serves as a corpus-shift control. The
study spans $751$ fine-tuning runs on five Qwen3 models from $0.6$B to $32$B
parameters and on Mistral-24B, a model from a second family.\looseness=-1

\paragraph{Contributions.}
\begin{itemize}
\itemsep0pt\parskip0pt\topsep2pt
\item \textbf{Transfer is signed and directed.} Task $A$ can help task $B$ while
$B$ hurts $A$, on both model families, between task types from one corpus and
inside fixed-budget mixtures: the two directions differ in $14$ and $13$ of $28$
pairs on the two families and have opposite signs, each direction's interval excluding
zero, in $3$ and $4$ (Section~\ref{sec:asym}).
\item \textbf{Which sources matter, not how many.} When every source may have its
own effect, no fit can separate mixture size from the average source effect
(Lemma~\ref{lem:alias}). Measured slopes on mixture size track the mean
source effect ($r=0.90$ over $12$ cells, Table~\ref{tab:dose}) and are negative or flat for definitions.
Evidence that more tasks help therefore cannot justify adding every task
(Section~\ref{sec:sizes}).
\item \textbf{One interfering task type can cancel the benefit of all the others.}
On Qwen3-32B, training on all other task types in equal parts (the \emph{uniform
mixture}) barely beats the untuned model on multi-hop questions ($20\%$ to $22\%$) and
critiques ($7\%$ to $8\%$). The map's mixture drops the interfering type and two others, doubles
critique accuracy (to $16\%$) and raises multi-hop accuracy to $34\%$
(Section~\ref{sec:predict}, Figure~\ref{fig:absolute}).
\item \textbf{The map predicts and selects mixtures before training.} In two
tests, of $8$ and $12$ cells, its recorded predictions have less than half the
error of a mixture-agnostic baseline. As a selector, it beats the similarity
selector at every mixture size we tried (single-run comparisons), by $3$ to $16\pp$ on the neuroscience
multi-hop and critique targets, because similarity does not measure a source's
effect. A map refit on $24$ randomly drawn probe runs recovers $96\%$ of the full probe's out-of-fold gain on
Qwen3-32B (Sections~\ref{sec:predict} to~\ref{sec:scope}).
\item \textbf{Transfer is a property of the data, shared across models.} Maps
fitted on two model families agree ($r=0.88$), and so do maps fitted at five
sizes over a $53$-fold range ($r=0.76$ to $0.92$). A mixture selected from the
$32$B map wins in $11$ of $12$ cells at the smaller sizes. The map differs on
a second corpus with much smaller training pools ($r<0.55$; Section~\ref{sec:scope}).
\end{itemize}
\section{Related Work}
\label{sec:related}

\paragraph{Transfer between tasks.}
Transfer has been measured between vision tasks
\citep{zamir2018taskonomy,standley2020tasks,fifty2021tag} and between language or
multimodal datasets
\citep{vu2020transfer,pruksachatkun2020intermediate,poth2021pretrain,%
kim2023taskweb,krishna2025latent,dataprophet2026}. Transfer
is often asymmetric \citep{dataprophet2026,dymkiewicz2025donors}, and the two
directions of a task pair can even differ in sign \citep{kim2023taskweb,krishna2025latent}.
Surface similarity predicts transfer poorly \citep{krishna2025latent}. These studies train
one source at a time or several tasks jointly; none splits a fixed budget
among sources. In the language and multimodal studies, each source is a separate
dataset, so task type is confounded with data source. We generate all task types
from one corpus and measure directed
transfer in fixed-budget mixtures that withhold the target
(Section~\ref{sec:asym}).

\paragraph{Number of tasks.}
Training on more tasks often helps
\citep{wei2022flan,wang2022supernaturalinstructions,chung2024scaling,aribandi2022ext5},
but a selected subset can outperform the full set
\citep{renduchintala2024smart,kim2023taskweb,lee2024instructionmatters}. A
mixture's size is the sum of its inclusion indicators, so a gain from more
tasks shows only that the average added task helps (Lemma~\ref{lem:alias}).
\citet{jung2026mixture} are closest to our design: they train on every subset
of five instruction tasks at four budgets, evaluate on the same five tasks,
and find that the best number of tasks grows with the budget. This trend is
equally consistent with an average source effect that changes with the
budget.

\paragraph{Target-directed selection.}
Similarity selectors compare each candidate with the target, using
instruction embeddings \citep{lee2024instructionmatters} or sparse-autoencoder
features \citep{ma2025monosemantic}. Such a score is symmetric and measures
resemblance, not benefit, so it cannot flag a similar source as harmful
(Section~\ref{sec:buys}). LESS \citep{xia2024less} estimates benefit
rather than resemblance: it scores each example by a signed, first-order
gradient approximation of its influence on the target loss. Datamodels predict a model's output from which training
examples it saw \citep{ilyas2022datamodels} and can select pretraining data
for target tasks \citep{engstrom2024dsdm}. The transfer map is a datamodel
over task types, fitted directly to fine-tuning runs that withhold the target.

\paragraph{Mixture optimization.}
Mixture methods tune continuous domain proportions for pretraining
\citep{xie2023doremi,ye2024mixinglaws,liu2025regmix} or fine-tuning
\citep{li2025sftmix}; some fit validation loss as a function of the
proportions to predict unseen mixtures. SMART \citep{renduchintala2024smart}
and TaskPGM \citep{taskpgm2025} build one instruction-tuning mixture over a pool
of training tasks from symmetric task similarities. ADAPT
\citep{kadasi2025adapt} learns task proportions under a token budget from
meta-gradients of the training tasks' own validation losses. We instead choose which
sources to include for one withheld target. Skill-It \citep{chen2023skillit}
comes closest to a transfer map: it builds a directed skill graph from
single-skill or pairwise runs, including edges to held-out skills, and uses
it to reweight the mixture during training. Data chosen with a smaller model
can train a larger one
\citep{xie2023doremi,liu2025regmix,chen2023skillit,xia2024less,shihab2026proxymix},
and data effects correlate across model scale \citep{khaddaj2025small}; we show
that at the level of task types the map's decisive entries are shared across
model families and sizes (Section~\ref{sec:scope}).

\paragraph{Generated domain supervision.}
Domain-adaptation pipelines generate several task types and train on all of
them \citep{cheng2024instructpt,cheng2024adaptllm,li2025scilitllm}. This
default is the count heuristic and the uniform mixture that we measure against. Bonito \citep{nayak2024bonito} instead
generates data of the target's own task type, which it assumes is known; we
ask which other types to add. Fine-tuning can erode capabilities that its
data do not cover \citep{luo2025forgetting}, so dropping types may cost
ability outside the target; Section~\ref{sec:scope} reports this trade-off.
\section{Method}
\label{sec:setup}

\paragraph{Corpora and conditions.}
The primary corpus is $110$ papers on motor control and its disorders from one
publicly funded research consortium%
\ifarxiv\footnote{The papers are publications of the Collaborative Research Centre
(CRC) 1451, ``Key Mechanisms of Motor Control in Health and Disease,''
funded by the Deutsche Forschungsgemeinschaft (DFG, German Research
Foundation); see \url{https://www.crc1451.uni-koeln.de/}.}\fi,
split by publication date into the $88$
oldest papers for training and the $22$ newest for evaluation ($14{,}618$
evaluation items).
Its narrowness is deliberate: it keeps the content behind the eight task
types similar, so the types differ mainly in what they ask. A
\emph{condition} pairs a model with a corpus, and we use three. The first is
Qwen3-32B (the post-trained release, run with thinking off) on this corpus. The
second, Mistral-Small-3.2-24B-Instruct-2506 (Mistral-24B) on the same corpus,
controls for model identity: it shares corpus, tasks, budget, split and scoring
rule. The third, Qwen3-32B on $951$ PubMed cancer-immunotherapy abstracts ($761$
for training, $190$ for evaluation, $1{,}464$ items), controls for corpus shift.
Its corpus, document granularity, training-pool size (generated training examples per task type) and evaluation size change
together, so we treat its effects as less well resolved throughout.
Section~\ref{sec:scope} adds four smaller Qwen3 sizes, $0.6$B to $8$B, each
with a reduced probe. A \emph{run} is one fine-tuning of one mixture at one
seed, scored on the full evaluation split; a \emph{cell} pairs a condition
with a withheld target, written condition/target (e.g., Mistral/\task{MHOP}).\looseness=-1

\paragraph{Tasks.}
A structured generator using GPT-5-mini renders corpus passages into eight task types:
cloze (\task{CLZ}), single- and multi-answer multiple choice (\task{SC-MC},
\task{MC-MC}), term matching (\task{MATCH}), causal explanation
(\task{WHY}), definitions (\task{DEF}), multi-hop questions (\task{MHOP})
and methodological critiques (\task{ERR}); Table~\ref{tab:inventory}
(Appendix~\ref{app:details}) gives the inventory. We focus on the three
reasoning targets, \task{WHY}, \task{MHOP} and \task{ERR}, which lie outside
an untuned model's reach (Section~\ref{sec:intro}; Figure~\ref{fig:absolute},
Appendix~\ref{app:cells}).

\paragraph{Probe design.}
Four balanced designs each withhold a disjoint pair of targets and vary the
remaining six sources over mixture sizes $d=1,\dots,6$
(Appendix~\ref{app:details}). Across the four designs, each condition trains
$94$ distinct mixtures ($282$ over the three conditions). Pooling across
designs, $53$ of these mixtures withhold a given reasoning target, and among
them all seven other tasks appear as sources, the candidates when we later
select a mixture for that target. Training on all seven
in equal parts is the \emph{uniform mixture}, the choice of the count
heuristic and the default this paper measures against.
Each run holds exactly $6{,}000$ examples divided equally (up to rounding) among its
sources, uses one split manifest per corpus, and excludes the target task
from training. The passages behind different task types overlap only partially
(Section~\ref{sec:limitations}).

\paragraph{Training and scoring.}
All runs train one epoch of response-masked supervised fine-tuning with
rank-$16$ rank-stabilized LoRA (rsLoRA; \citealp{kalajdzievski2023rslora})
under a shared optimizer and decoding configuration
(Appendix~\ref{app:details}); only the constant-learning-rate runs of
Appendix~\ref{app:order} change the schedule. Closed tasks score by exact or
option-set match. Open tasks (\task{DEF}, \task{WHY}, \task{MHOP}, \task{ERR})
use an embedding rule. We split gold and generated answers into sentences, let
$s$ be the cosine similarity of \texttt{all-mpnet-base-v2} sentence
embeddings \citep{reimers2019sbert}, and count an item as correct when
\begin{equation}
  s_{\mathrm{full}}>\tau \quad\text{\textbf{or}}\quad s_{\mathrm{avg}}>\tau,
  \qquad \tau=0.85 .
  \label{eq:metric}
\end{equation}
Here $s_{\mathrm{full}}$ is the similarity of the whole answers, and
$s_{\mathrm{avg}}=\tfrac12\bigl[\tfrac1m\sum_i\max_j s(p_i,g_j)+\tfrac1n\sum_j\max_i s(p_i,g_j)\bigr]$
averages each sentence's best match on the other side, over the $m$ predicted
sentences $p_i$ and the $n$ gold sentences $g_j$.

\paragraph{Estimating the transfer map.}
Let $V$ be the set of eight task types. Write the accuracy of a withheld target
$T$ after training on a source set $M \subseteq V \setminus \{T\}$ as a set function
$y_T : 2^{V \setminus \{T\}} \to \mathbb{R}$. Any regression model of $y_T$
keeps some orders of interaction among sources and drops the rest. We fit the
additive (degree-one) case
\begin{equation}
  y_T(M) \;=\; \mu_T \;+\; \sum_{S \in V \setminus \{T\}} \beta_{S \to T}\, z_S
  \;+\; \varepsilon, \qquad z_S = \mathbb{1}[S \in M],
  \label{eq:additive}
\end{equation}
once per cell, weighting each distinct mixture once (retrainings are not
double counted). We report every coefficient as a \emph{centered substitution
effect}: within a target's fit we subtract the mean source coefficient, the
normalization we adopt because the design identifies only contrasts between
sources (Section~\ref{sec:sizes}). An entry then reads as the
change in the target's accuracy when that source replaces an average source,
and only after centering do two targets' columns mean the same thing. The
\emph{transfer map} holds these entries for every source--target pair.
A source \emph{helps} a target when its entry is positive and \emph{hurts} it
when the entry is negative.

\paragraph{From map to mixture.}
One rule turns a cell's fit into a mixture: keep every source whose uncentered
coefficient $\beta_{S \to T}$ is positive and drop the rest, with a predicted
effect over the uniform mixture equal to
minus the sum of the dropped coefficients. The rule uses uncentered coefficients
because each is the fit's predicted change from adding that source to a
mixture at a fixed budget. The
rule keeps three to seven of the seven candidate sources. The recorded
predictions of Section~\ref{sec:predict} apply it to fits over every run trained
before the test runs, retrainings included (Appendix~\ref{app:cells}).

\paragraph{Prediction baselines.}
Two baselines test the predictions. The \emph{mixture-agnostic baseline} predicts
that no change of mixture moves the target, and the \emph{count baseline} predicts
from mixture size alone, with a linear or a quadratic fit to the probe.

\paragraph{Competing selectors.}
Five selectors compete with the map's mixture at the same budget and one fixed
training seed. The uniform mixture trains all seven sources, and the single best
source scores highest on the target when trained alone in the probe. A
TaskShop-style ranker \citep{kim2023taskweb} keeps the sources with the highest
such scores; a Skill-It-style ranker \citep{chen2023skillit} first weights each
score by how well the source learns its own task. 
A third ranker, embedding similarity (the similarity heuristic run as a selector), 
keeps the sources whose mean embedding of
sampled question--answer pairs, under the scoring rule's encoder, lies closest to
the target's. The three rankers keep as many sources as the map's mixture, and we
also run TaskShop and embedding similarity with three and five sources
(Appendix~\ref{app:selectors}).
\section{The Transfer Map}
\label{sec:map}

\begin{figure}[t]
  \centering
  \includegraphics[width=\textwidth]{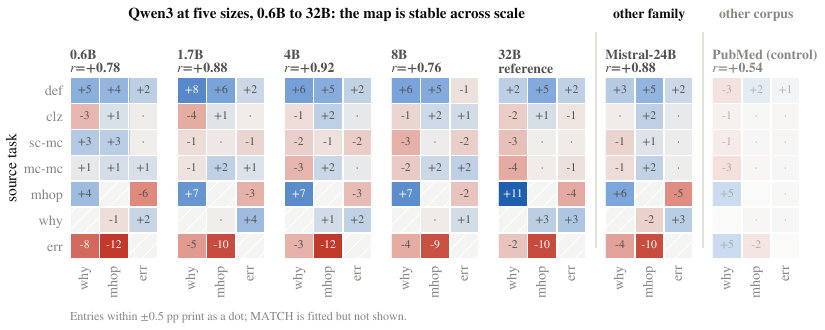}
  \caption{\textbf{The transfer map is signed, directed, and stable across
  model size.} Effect of each source (rows) on each withheld reasoning
  target (columns), in pp relative to an average source. Left
  five panels: Qwen3 at five sizes; $r$ is the mean per-target correlation with
  the $32$B map (Section~\ref{sec:scope}). Mistral-24B on the same corpus
  keeps the map; Qwen3-32B on PubMed (faded), with about a tenth of the
  evaluation items, agrees less, but \task{ERR} remains \task{MHOP}'s most
  negative source. For these two panels, $r$ pools the $18$ entries from sources other than
  \task{MATCH}, which scores near ceiling (Appendix~\ref{app:details}).}
  \label{fig:map}
\end{figure}

\subsection{Transfer is directed and signed}
\label{sec:asym}

The map is asymmetric as a whole (Table~\ref{tab:asym}). The off-diagonal map
correlates with its transpose at $r=0.304$, $0.290$ and $0.262$ in the three
conditions; a symmetric map would give $1$. Estimation
noise alone can lower $r$, so we test each of the $28$ task pairs directly. In
$14$, $13$ and $9$ of them, the bootstrap interval on the difference between the
two directions excludes zero. In $3$, $4$ and $0$ pairs, the two directions have
opposite signs and each direction's own interval excludes zero. The pair from the
introduction is one of them: on Qwen3-32B, \task{DEF}$\to$\task{MHOP} is
$+5.27\pp$ $[+3.55,+6.84]$ and \task{MHOP}$\to$\task{DEF} is $-1.24\pp$
$[-1.68,-0.75]$. Uncentered, the second coefficient is $-1.92\pp$
$[-2.38,-1.41]$: adding \task{MHOP} to a mixture at a fixed budget lowers
\task{DEF} accuracy in absolute terms. In all three
conditions, the most asymmetric pair is \task{MHOP} and \task{WHY}
(Figure~\ref{fig:overview}c; Appendix~\ref{app:asymmetry}).

The map also has large negative entries (Figure~\ref{fig:map}). On Qwen3-32B and Mistral-24B the most
negative entry is \task{ERR}$\to$\task{MHOP}, at $-9.72$ and $-9.55\pp$; we call a
target's most negative source its \emph{interferer}. These entries do not track
supervised token volume (Appendix~\ref{app:details}).

\subsection{Source identity, not mixture size, predicts accuracy}
\label{sec:sizes}

Table~\ref{tab:cv} (Appendix~\ref{app:identification}) compares how well source
identity and mixture size predict a target's accuracy on unseen mixtures
(Figure~\ref{fig:overview}b).
Source identity explains $0.768$, $0.746$ and $0.557$ of the within-target
variance in the three conditions. A linear term in mixture size explains
$0.114$, $0.003$ and $0.153$, and a quadratic term and an indicator for each
size both do worse. Adding all $21$ pairwise interactions between sources raises
the first two to $0.840$ and $0.849$ but lowers the third to $0.522$. Sources also
interact, but one entry per source already carries most of the predictable
variance.

The count heuristic rests on evidence that accuracy rises with the number of
tasks. Lemma~\ref{lem:alias} bounds what such evidence can reveal whenever each source
may have its own effect, as in the experiments behind it, and rests on one fact:
a mixture's size counts the sources it contains, $d=|M|=\sum_S z_S$. Add a size term $\gamma d$ to
Equation~\ref{eq:additive}:

\begin{lemma}[Size alias]
\label{lem:alias}
\textbf{(a)} For any $c$, replacing every $\beta_S$ by $\beta_S+c$ and $\gamma$
by $\gamma-c$ leaves every fitted value of $y_T$ unchanged, because
$\sum_S z_S=d$. The data can therefore identify only the contrasts
$\beta_S-\beta_{S'}$ and the sums $\beta_S+\gamma$, never $\gamma$ or any
$\beta_S$ alone. The same holds at every interaction order: adding $\gamma_k$
to every order-$k$ interaction coefficient absorbs a size term
$\gamma_k\binom{d}{k}$, because $\binom{d}{k}$ is the sum of
$z_{S_1}\cdots z_{S_k}$ over all sets of $k$ sources.

\textbf{(b)} Regressing $y_T$ on $d$ alone by ordinary least squares (OLS)
gives the slope
$\sum_S w_S\hat\beta_S$, where $\hat\beta_S$ are the OLS coefficients of
Equation~\ref{eq:additive} on the same runs and the weights
$w_S=\mathrm{Cov}(z_S,d)/\mathrm{Var}(d)$ sum to one. When every source appears
equally often at every size, the weights are equal and the slope is the mean
$\hat\beta_S$.
\end{lemma}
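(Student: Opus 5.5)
Part (a) is a direct substitution argument, so I will first write the augmented model $\mu_T + \sum_S \beta_S z_S + \gamma d$ and substitute $d=\sum_S z_S$. The linear predictor becomes $\mu_T + \sum_S (\beta_S+\gamma) z_S$. It therefore depends on $(\beta,\gamma)$ only through the sums $\beta_S+\gamma$, which are invariant under $(\beta_S,\gamma)\mapsto(\beta_S+c,\gamma-c)$. The differences $(\beta_S+\gamma)-(\beta_{S'}+\gamma)=\beta_S-\beta_{S'}$ are then also invariant.

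To see that nothing else is identified, I would note that the design matrix with columns $(1, z_S, d)$ has the vector $(0,1,\dots,1,-1)$ in its null space. Any linear functional of the parameters that is identified must therefore annihilate that direction. The identified functionals are exactly those spanned by $\mu_T$, the sums $\beta_S+\gamma$, and hence the contrasts. Neither $\gamma$ nor a single $\beta_S$ belongs to this span, since each has nonzero inner product with the null vector. I will add the remark that these are all the identified quantities when the probe makes the $z_S$ columns plus intercept full rank; otherwise fewer are identified, and the ``never'' claim only gets stronger.

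For higher orders I will use the identity $\binom{d}{k}=\sum_{|K|=k}\prod_{S\in K} z_S$. This holds because the right side counts $k$-subsets of $M$, and the product is $1$ exactly when $K\subseteq M$. Shifting every order-$k$ coefficient by $\gamma_k$ then adds $\gamma_k\binom{d}{k}$ to the predictor, which a size term $-\gamma_k\binom{d}{k}$ cancels. This is the same argument with the same null-space vector.

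Part (b) is the step I expect to need the most care. I will use the fact that the OLS fit of the additive model has residual $e=y-\hat\mu-\sum_S\hat\beta_S z_S$ orthogonal to $1$ and to every $z_S$, hence to $d=\sum_S z_S$. The simple-regression slope is $\mathrm{Cov}(y,d)/\mathrm{Var}(d)$, with empirical moments over the runs (one weight per distinct mixture, as in the fit). Substituting $y=\hat\mu+\sum_S\hat\beta_S z_S+e$ and using $\mathrm{Cov}(e,d)=0$, which follows from orthogonality to both $1$ and $d$, gives slope $=\sum_S\hat\beta_S\,\mathrm{Cov}(z_S,d)/\mathrm{Var}(d)$. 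The weights sum to one because $\sum_S\mathrm{Cov}(z_S,d)=\mathrm{Cov}(d,d)$. The subtlety is that the additive OLS coefficients must be well defined, so I will assume the design (without the size term) is full rank, as the probe is. If it is not, any OLS solution works, since the fitted values are unique.

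For the equal-frequency case, I will condition on size. If every source appears equally often at each size $d$, then $E[z_S\mid d]=d/|V\setminus\{T\}|$ for every $S$. So $\mathrm{Cov}(z_S,d)=\mathrm{Cov}(E[z_S\mid d],d)=\mathrm{Var}(d)/7$, all weights equal $1/7$, and the slope is the mean $\hat\beta_S$.
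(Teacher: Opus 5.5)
Your proposal is correct and follows the paper's proof: part (a) rests on the substitution $\sum_S z_S = d$, and part (b) on the additive fit's residuals being orthogonal to the intercept and every $z_S$ (hence to $d$), followed by division by $\mathrm{Var}(d)=\sum_S\mathrm{Cov}(z_S,d)$. Your null-space characterization of the identified functionals and your conditioning argument for the balanced case ($\mathrm{Cov}(z_S,d)=\mathrm{Var}(d)/7$) fill in steps the paper only asserts. One small correction: for order $k\ge 2$ the null vector is not the same vector but the analogous one, with ones on the order-$k$ columns and $-1$ on the $\binom{d}{k}$ column.
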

\noindent Appendix~\ref{app:identification} proves both parts.
By (a), ``more sources help'' and ``the average source helps'' are the same
claim, $\bar\beta+\gamma>0$ for the mean $\bar\beta$ of the $\beta_S$. By (b), a
positive size slope shows only that the average task helps and cannot justify
adding every task.

Accuracy on the withheld target rises with mixture size for the three reasoning
targets, except on Mistral/\task{MHOP} where it is flat, and falls for \task{DEF}
in both neuroscience conditions (Table~\ref{tab:dose}, Figure~\ref{fig:dose}).
No single size bonus exists. Even where the average source helps, some sources have
$\hat\beta_S<0$, and Section~\ref{sec:predict} shows that dropping them at a
fixed budget raises the target.
\section{Predicting Unseen Mixtures}
\label{sec:predict}

We test whether the map predicts the effect of moving from the uniform
mixture, which no probe run trains, to the mixture that the rule of Section~\ref{sec:setup} selects. We compare the two
at the same $6{,}000$-example budget, paired by data-order seed.

We recorded each predicted effect before training the $54$ test runs: both
mixtures at three data-order seeds in each of nine cells (the three reasoning
targets in each condition). In three cells the
probe already contains the selected mixture; refitting the map without those
runs moves their predictions by at most $0.22\pp$. One cell, PubMed/\task{ERR},
is vacuous: the rule keeps all seven sources, so it predicts no change. That
leaves $n=8$ cells (Table~\ref{tab:cells}, Appendix~\ref{app:cells}).

\paragraph{The recorded predictions outperform both baselines.}
We lead with mean absolute error (MAE), because at $n=8$ two large-effect cells
dominate Pearson $r$. The recorded predictions have an MAE of $2.31\pp$, with
$r=0.865$ and a slope of realized on predicted effect, fitted with no
intercept, of $1.054$ ($1$ is ideal; Table~\ref{tab:calib},
Figure~\ref{fig:overview}d). The mixture-agnostic baseline, which predicts zero
effect, has an MAE of $6.20\pp$. The count baseline does worse still, at
$9.31\pp$ (linear fit) and $9.20\pp$ (quadratic fit). Every selected mixture
has fewer sources than the uniform mixture, and accuracy on these targets
rises with mixture size on average (Section~\ref{sec:sizes}), so the linear
fit predicts a loss in every cell and the quadratic fit in seven of the eight.
Even the best constant prediction, chosen after seeing the outcomes, has an MAE
of $4.80\pp$ (Table~\ref{tab:calib}). The map thus tracks how the effect varies
across cells, beyond its typical size. With each target's mean removed from the
predicted and realized effects, its MAE is $0.79\pp$ against $2.26\pp$ for a
per-target constant (Appendix~\ref{app:cells}).

\paragraph{Most of the gain comes from dropping one task type.}
Six of the eight realized effects are gains, of up to $13.76\pp$ over three
seeds. On critiques, the uniform mixture gains only about $1\pp$ over the untuned
model in both neuroscience conditions, and the selected mixture doubles its
accuracy (Figure~\ref{fig:absolute}, Appendix~\ref{app:cells}). In each non-vacuous \task{MHOP} and
\task{ERR} cell, the target's interferer carries $75$ to $100\%$ of the predicted
gain.

\paragraph{The misses.}
In two cells the map predicted a small gain and the realized effect was
slightly negative: Qwen/\task{WHY} ($+2.64$ predicted, $-0.90$ realized) and
PubMed/\task{WHY} ($+1.18$, $-0.95$). Neither loss has a 95\% confidence
interval that excludes zero, whereas all six gains do (Table~\ref{tab:cells}).
These two cells therefore neither confirm nor contradict the predicted sign.
\section{The Map Against Other Selectors}
\label{sec:buys}

Table~\ref{tab:selectors} compares the mixture the map selects against the five
selectors of Section~\ref{sec:setup} at one training seed, each at its own mixture
size $d$, the count of sources it keeps (the map's $d$ for the three rankers).
The similarity selector loses $7.47$ to
$9.68\pp$ on three of the four \task{MHOP} and \task{ERR} cells of Qwen3-32B and
Mistral-24B and $3.11\pp$ on the fourth (Figure~\ref{fig:selectors}a,
Appendix~\ref{app:selectors}). Like the map, our TaskShop-style ranker ranks sources by their measured
effect on the target. It stays within $2.73\pp$ of the map on the six neuroscience cells, so the
similarity selector loses because it does not measure each source's effect. The Skill-It-style ranker edges the map on three
cells, by at most $1.45\pp$, and trails it on the other three, by $2.43$ to $7.91\pp$ (its Qwen/\task{MHOP} run tripped the degeneration check). Each comparison uses
one run per mixture, so the TaskShop margins and Skill-It's edges are not resolved.

\begin{table}[t]
\centering
\caption{Map-selected mixture minus competitor on the withheld target, at
each competitor's mixture size $d$ and at one training seed; positive
favours the map. ``Interferer'' is the target's most negative measured
source. $\dagger$ marks a run whose generations tripped a degeneration check
(Appendix~\ref{app:details}). $\ddagger$ marks the vacuous cell: every
uncentered coefficient there is positive, so the rule keeps all seven sources
and we print no interferer; its two runs train the same mixture at the same
seed, so their $+1.69\pp$ gap is one replicate difference. Full sweep in
Appendix~\ref{app:selectors}.}
\label{tab:selectors}
\footnotesize
\begin{tabular}{llrrrrrr}
\toprule
Condition & Target & Interferer & Uniform & Single-best & Skill-It & TaskShop & Embed-sim \\
\midrule
Qwen3-32B   & \task{WHY}  & \task{MC-MC} & $+1.21$  & $-2.37$          & $-1.45$          & $+0.00$ & $-0.53$ \\
Qwen3-32B   & \task{MHOP} & \task{ERR}   & $+13.19$ & $-1.41^{\dagger}$& $+2.43^{\dagger}$& $+2.73$ & $+3.11$ \\
Qwen3-32B   & \task{ERR}  & \task{MHOP}  & $+8.64$  & $+2.11^{\dagger}$& $-0.20$          & $+0.64$ & $+8.10$ \\
Mistral-24B & \task{WHY}  & \task{ERR}   & $+2.18$  & $-2.13$          & $-1.31$          & $-0.34$ & $+1.36$ \\
Mistral-24B & \task{MHOP} & \task{ERR}   & $+17.23$ & $+0.44^{\dagger}$& $+7.59$          & $-0.15$ & $+9.68$ \\
Mistral-24B & \task{ERR}  & \task{MHOP}  & $+7.76$  & $+1.42$          & $+7.91$          & $+0.10$ & $+7.47$ \\
PubMed      & \task{WHY}  & \task{MC-MC} & $+0.00$  & $+4.27$          & $+17.54$         & $-1.42$ & $+1.90$ \\
PubMed      & \task{MHOP} & \task{ERR}   & $+2.45$  & $+4.90$          & $+1.47$          & $+0.98$ & $+1.96$ \\
PubMed      & \task{ERR}  & ---          & $+1.69^{\ddagger}$ & $+0.00$  & ---              & --- & --- \\
\bottomrule
\end{tabular}
\end{table}

\paragraph{Similarity cannot separate a helpful source from a harmful one.}
A single matched $d$ leaves open whether that $d$ favoured the map, so
we re-run the similarity selector at $d=3$, at its matched $d$ and at
$d=5$. The map stays ahead at every $d$ on all four
\task{MHOP} and \task{ERR} cells of Qwen3-32B and Mistral-24B, by $3.11$ to
$15.91\pp$, and within $1.45\pp$ of zero on their \task{WHY} cells
(Appendix~\ref{app:selectors}). Both large jumps in that
sweep occur on the \task{MHOP} cells, when widening the mixture to $d=5$
admits the target's interferer, \task{ERR}.
A similarity score does not encode the sign of transfer, so a source that helps a target and a
source that hurts it enter such a selector alike: at its matched
$d$ the similarity mixture holds the target's interferer in four of
the eight non-vacuous cells. Nor is the gap a matter of examples per source,
which rise as a mixture narrows: at $d=3$ the similarity selector trains at
least as many examples per source as the map on the same four \task{MHOP} and \task{ERR} cells, and
still trails it by $3.41$ to $9.68\pp$ (Appendix~\ref{app:selectors}).
\section{Where the Map Transfers: Target, Corpus, Family, and Scale}
\label{sec:scope}

\paragraph{A selection is specific to its target.}
The selection rule (Section~\ref{sec:setup}) keeps a different set of sources for
each target in the same condition (Table~\ref{tab:cells}), and raising accuracy on
the target costs accuracy on the other tasks. Across the eight non-vacuous cells, the
map's mixture scores on average $2.40\pp$ below the uniform mixture on the non-target
macro (the mean accuracy over the six scored tasks other than the target and
\task{MATCH}) and $2.70\pp$ above the single best source
(Figure~\ref{fig:selectors}b, Appendix~\ref{app:selectors}).

\paragraph{The map agrees across model families but did not carry over to our second corpus.}
Over the $18$ map entries from a non-\task{MATCH} source to a reasoning target, the
Qwen3-32B and Mistral-24B maps agree at $r=0.880$, while the PubMed map agrees with
them at only $r=0.279$ to $0.545$. The PubMed condition also changes document
granularity, pool size and evaluation size (Section~\ref{sec:setup}), so this
comparison does not isolate the corpus. When we refit the map without one
design's mixtures and let it pick among that design's trained mixtures
(out-of-fold), its pick beats the design's widest mixture by $+5.03\pp$ on Qwen3-32B
and $+5.82\pp$ on Mistral-24B but trails it by $0.51\pp$ on PubMed. These are
retrospective gains on folds that share no mixture.

\paragraph{The map is stable across model scale.}
We fit the map independently at four other Qwen3 sizes, $0.6$B to $8$B, each from
the same fixed $24$-run probe, a reduced design with mixture sizes $1$ to $4$
(Appendix~\ref{app:details}). Every map agrees with the $32$B map, at $r=0.757$ to $0.918$
(Figure~\ref{fig:map}). Of the $21$ entries from a source to a reasoning target,
$11$ keep their sign at all five sizes and on Mistral-24B. \task{DEF} helps
\task{MHOP} by $4.4$ to $5.8\pp$, and \task{ERR} hurts it by $9.3$ to $11.6\pp$. Agreement does not track size: the $8$B map agrees least. The
degeneration check flags $18$ of the $24$ probe runs at $0.6$B, mostly for cloze
answers that hit the token cap. Across the probes at the four smaller sizes, only one flag, on
\task{WHY} at $1.7$B, falls on a column that a reasoning-target fit reads
(Appendix~\ref{app:details}).

\paragraph{A selection made at one size beats the uniform mixture at every other size.}
Before training at any other size, we recorded the mixture that the $32$B map
selected for each target, then trained it against the uniform mixture at each of
the four other sizes over three seeds (Table~\ref{tab:xsize}, Appendix~\ref{app:cells}). These $12$
size--target cells form a second test of recorded predictions, disjoint from the eight cells of
Section~\ref{sec:predict}. The selected mixture wins in $11$ of the $12$ cells
(seven intervals exclude zero, none below it), by a mean of $+4.44\pp$, and the
$32$B map predicts these effects with an MAE of $2.28\pp$, against $4.64\pp$ for the
mixture-agnostic baseline. The mean gain is positive at every size, from $+3.21$ to
$+5.26\pp$; \task{WHY} is the weakest target at every size and accounts for the one
loss. A map refit at each size from its $24$-run probe gains $+6.53\pp$ over the
same $12$ cells, and every interval excludes zero; the transferred selection keeps
about two thirds of that gain.

\paragraph{A $24$-run probe selects nearly as well as the full probe.}
When each out-of-fold refit sees only $24$ randomly drawn probe runs, the gain on
Qwen3-32B is $+4.83\pp$, or $96\%$ of the full probe's $+5.03\pp$; on
Mistral-24B, $24$ runs recover the full gain ($+5.88$ against $+5.82\pp$,
Table~\ref{tab:scope}). The gain varies little across subsets (standard deviation $0.33\pp$ on Qwen3-32B,
Appendix~\ref{app:details}). A $24$-run probe costs about $22$ GPU-hours at $32$B and
$11$ to $16$ at the other sizes, a one-time cost per condition.

\section{Limitations}
\label{sec:limitations}
 
\paragraph{Every score comes from an automated rule.}
No human judges answer quality, so every effect here is an effect on
our automated scoring rules. The gold answers are themselves generated by
GPT-5-mini, so accuracy measures agreement with generated references under the
rule. Without the rule's threshold, every realized effect of
Section~\ref{sec:predict} is positive (Appendix~\ref{app:metric}). Restricted to single-sentence answers,
which holds the answer's format fixed, the two largest effects survive and two
others vanish or reverse; in those two cells one run rarely answers in one sentence. A
human evaluation of the predicted cells would settle whether the effects hold beyond the rule.
 
\paragraph{The design isolates neither the passages nor the dose of a source.}
The passages behind different task types overlap only in part, so this design cannot isolate
supervision format. At a fixed budget, a narrower mixture
also trains more examples per source, so the identity of a source and its dose move
together. Supervised token volume does not rank the effects
(Appendix~\ref{app:details}), but no run holds passages or examples per source
fixed; passage-matched and dose-matched controls would settle both.
 
\paragraph{The prediction tests cover few cells.}
Each test, of eight and of twelve cells, reuses targets and conditions
across its cells, so its intervals are descriptive, and we transferred selections
across scale within one model family. The PubMed condition has about a tenth of the
evaluation items, so we report its cells as unresolved and not as null.
 
\paragraph{The mixture is a set, but training it is a sequence.}
Equation~\ref{eq:additive} treats a mixture as a set of sources. In a separate
experiment, the order of an identical set of examples alone moves the target's
accuracy where an interferer is among the sources (Appendix~\ref{app:order}, which
also reports a recorded prediction that failed). We read this as the boundary of
the set abstraction.

\section{Conclusion}
\label{sec:conclusion}

In this paper we asked which task types to train on when adapting a language
model to a specialized corpus under a fixed budget. The count heuristic and the
similarity heuristic rest on assumptions that fail. Transfer can be negative: in both
neuroscience conditions, one interfering task type cancels most of what the
other types add on two of the three reasoning targets. It is not symmetric: on
both model families, some task pairs help in one direction and hurt in the
other. Which sources one adds predicts accuracy; how many does not, because no
fit can separate mixture size from the average source effect.

The transfer map replaces both assumptions with a measurement that a
practitioner makes once per corpus. It predicts a target's accuracy on unseen
mixtures before training, with less than half the error of a mixture-agnostic
baseline in both prediction tests. The choice matters: on Qwen3-32B, the map's
mixture raises multi-hop accuracy from $22\%$ to $34\%$ and doubles critique
accuracy, where the uniform mixture barely beats the untuned model. As a
selector, it beats the similarity selector at every mixture size we tried on the
neuroscience multi-hop and critique targets, because a similarity score carries
no sign. At each smaller model size, a map fitted from only $24$ runs
selects a mixture that beats training on all source tasks, and so does the
mixture the $32$B map selected.%
\label{endofbody:check}

Transfer is thus a property of the data: the map held when we changed the
model, across two families and a $53$-fold range of size, and moved when we
changed the corpus. A selection serves one target at some cost to the other
tasks, and a new corpus needs its own probe. Our second corpus also had much
smaller training pools and held-out split, and we could not resolve the probe's
gain there. Which corpus factors move the map is therefore open, as is what
makes a task type an interferer. The same choice arises in multi-domain reinforcement
learning, where training on one domain often degrades others
\citep{yang2026perturbation,mcpo2026,onestep2026}. Wherever task types share a
budget, a practitioner has a third option beyond adding every task or picking
the similar ones: measuring which task types help a target and which one
interferes, which here took two dozen fine-tuning runs.%
\label{endofbody}

\section*{AI Use Statement}
GPT-5-mini generated the instruction--response pairs of all eight task types, for
training and evaluation, from the corpus passages (Section~\ref{sec:setup}). We also
used AI assistants, including Claude, to help write code and to draft and edit the
text. We take responsibility for the final content of this work, including text,
claims and artifacts produced with the aid of generative AI.

\section*{Reproducibility Statement}
\ifarxiv
The repository \url{https://github.com/Vahidrostami/directed-transfer} contains the
analysis code and the frozen evidence bundle behind every table and figure.
\else
The anonymized supplementary material contains the analysis code and the frozen
evidence bundle behind every table and figure.
\fi
Section~\ref{sec:setup} and
Appendix~\ref{app:details} give the corpus split, task inventory, probe designs,
training configuration, decoding and scoring rule, and the commands that recompute
and verify every table and figure.

\ifarxiv
\section*{Acknowledgments}
This work was funded by the Deutsche Forschungsgemeinschaft (DFG, German Research
Foundation) through the Collaborative Research Center ``Motor Control in Health and
Disease'' (DFG-SFB~1451, Project INF, ID 431549029). We also thank the IT
Center University of Cologne (ITCC) for providing compute resources and support on
the DFG-funded HPC system RAMSES (Research Accelerator for Modeling and Simulation
with Enhanced Security) (DFG funding number: INST 216/512-1 FUGG).
\fi

\bibliography{references}
\bibliographystyle{iclr2027_conference}

\clearpage
\appendix
\section{Recorded Predictions and Where the Map Holds}
\label{app:cells}

\begin{table}[h]
\centering
\caption{Calibration on the same eight cells. The constant oracle is fitted on
the values it is scored against, so it lower-bounds any constant predictor.}
\label{tab:calib}
\footnotesize
\begin{tabular}{lrrr}
\toprule
Predictor & MAE (pp) & Pearson $r$ & Origin slope \\
\midrule
Recorded map prediction          & $\mathbf{2.31}$ & $0.865$ & $1.054$ \\
Zero effect                      & $6.20$ & --- & --- \\
Best constant, fitted post hoc   & $4.80$ & --- & $1.037$ \\
Count, linear                    & $9.31$ & $0.731$ & $-0.371$ \\
Count, quadratic                 & $9.20$ & $0.362$ & $-0.730$ \\
\bottomrule
\end{tabular}
\end{table}

\begin{table}[h]
\centering
\caption{Where the map holds. Scale agreement is the mean per-target centered
correlation with the Qwen3-32B map; transport agreement is the centered
correlation between conditions over the $18$ non-\task{MATCH}
source-to-reasoning entries. An out-of-fold gain refits the map without one
design's mixtures, lets it pick among that design's trained mixtures, and
compares the pick with that design's widest mixture; the $24$-run rows refit
on $24$ runs drawn at random from the same fitting runs, averaged over $60$
draws. The last block trains the recorded
$32$B selection against the uniform mixture at each size, averaged over the
three targets (Table~\ref{tab:xsize} gives every cell).}
\label{tab:scope}
\footnotesize
\begin{tabular}{lr@{\qquad}lr}
\toprule
\multicolumn{2}{l}{\emph{Scale: agreement with 32B map}} & \multicolumn{2}{l}{\emph{Transport: between conditions}} \\
\midrule
Qwen3-0.6B & $0.780$ & Qwen $\leftrightarrow$ Mistral & $0.880$ \\
Qwen3-1.7B & $0.876$ & Qwen $\leftrightarrow$ PubMed  & $0.545$ \\
Qwen3-4B   & $0.918$ & Mistral $\leftrightarrow$ PubMed & $0.279$ \\
Qwen3-8B   & $0.757$ & & \\
\midrule
\multicolumn{4}{l}{\emph{Out-of-fold gain over the widest mixture of the left-out design (pp)}} \\
\midrule
Qwen3-32B, full probe & $+5.03$ & Qwen3-32B, $24$ runs & $+4.83$ \\
Mistral-24B, full probe & $+5.82$ & Mistral-24B, $24$ runs & $+5.88$ \\
PubMed, full probe & $-0.51$ & PubMed, $24$ runs & $-0.27$ \\
\midrule
\multicolumn{4}{l}{\emph{Recorded $32$B selection, trained at each size (pp over uniform)}} \\
\midrule
Qwen3-0.6B & $+4.34$ & Qwen3-4B & $+5.26$ \\
Qwen3-1.7B & $+4.96$ & Qwen3-8B & $+3.21$ \\
\bottomrule
\end{tabular}
\end{table}

\begin{table}[h]
\centering
\caption{The second test of recorded predictions, cell by cell. \emph{Transported}: the
mixture the Qwen3-32B map selected for each target, recorded before any
other size ran, minus the uniform mixture; it keeps the Qwen3-32B sources of
Table~\ref{tab:cells} (\task{WHY}: \task{CLZ}, \task{DEF}, \task{ERR},
\task{MHOP}; \task{MHOP}: \task{CLZ}, \task{DEF}, \task{MATCH}, \task{WHY};
\task{ERR}: \task{DEF}, \task{MATCH}, \task{SC-MC}, \task{WHY}), and
\emph{Recorded} is the $32$B map's prediction. \emph{Refit}: the mixture
that the map refitted at that size selects, with $d$ sources, minus the
uniform mixture. Effects are paired over three data-order seeds, with $t$
intervals over the seed-paired differences. No run in these cells trips the
degeneration check on the withheld target except in the refit
1.7B/\task{MHOP} cell.}
\label{tab:xsize}
\footnotesize\setlength{\tabcolsep}{4pt}
\begin{tabular}{llrrlrrl}
\toprule
& & \multicolumn{3}{c}{Transported $32$B selection} & \multicolumn{3}{c}{Refit at this size} \\
\cmidrule(lr){3-5}\cmidrule(lr){6-8}
Size & Target & Recorded & Realized & 95\% CI & $d$ & Realized & 95\% CI \\
\midrule
0.6B & \task{WHY} & $+2.64$ & $+1.39$ & $[-2.41,+5.19]$ & $4$ & $+7.15$ & $[+5.20,+9.11]$ \\
0.6B & \task{MHOP} & $+9.92$ & $+5.04$ & $[+4.38,+5.71]$ & $3$ & $+6.60$ & $[+5.41,+7.79]$ \\
0.6B & \task{ERR} & $+3.53$ & $+6.60$ & $[+4.77,+8.43]$ & $3$ & $+6.88$ & $[+4.97,+8.78]$ \\
1.7B & \task{WHY} & $+2.64$ & $+1.49$ & $[-0.07,+3.04]$ & $4$ & $+6.10$ & $[+5.50,+6.71]$ \\
1.7B & \task{MHOP} & $+9.92$ & $+7.69$ & $[+5.02,+10.36]$ & $4$ & $+8.42$ & $[+7.19,+9.65]$ \\
1.7B & \task{ERR} & $+3.53$ & $+5.71$ & $[+5.25,+6.18]$ & $3$ & $+5.14$ & $[+3.89,+6.39]$ \\
4B & \task{WHY} & $+2.64$ & $+0.34$ & $[-1.84,+2.52]$ & $3$ & $+3.92$ & $[+1.95,+5.89]$ \\
4B & \task{MHOP} & $+9.92$ & $+10.61$ & $[+7.38,+13.83]$ & $5$ & $+10.14$ & $[+8.01,+12.27]$ \\
4B & \task{ERR} & $+3.53$ & $+4.83$ & $[+2.92,+6.74]$ & $4$ & $+5.04$ & $[+3.89,+6.20]$ \\
8B & \task{WHY} & $+2.64$ & $-1.16$ & $[-2.43,+0.11]$ & $3$ & $+3.15$ & $[+1.30,+5.00]$ \\
8B & \task{MHOP} & $+9.92$ & $+6.34$ & $[+2.16,+10.52]$ & $4$ & $+11.37$ & $[+7.32,+15.42]$ \\
8B & \task{ERR} & $+3.53$ & $+4.45$ & $[-0.28,+9.19]$ & $3$ & $+4.39$ & $[+3.03,+5.75]$ \\
\bottomrule
\end{tabular}
\end{table}

\begin{table}[h]
\centering
\caption{Every predicted cell, recorded prediction against realized effect: selected mixture
minus uniform default, paired over three data-order seeds, with $t$ intervals
over the seed-paired differences. $d$ is the number of sources the rule keeps,
against seven in the uniform default, so a smaller $d$ also means more examples
per source. The PubMed \task{ERR} cell is excluded from the primary $n=8$:
every uncentered coefficient there is positive, the rule drops nothing, and the predicted
effect is exactly zero. The probe already trains the selected mixture of
Mistral/\task{MHOP}, Mistral/\task{ERR} and PubMed/\task{MHOP}; refitting
without those runs moves their predictions by at most $0.22\pp$.}
\label{tab:cells}
\footnotesize\setlength{\tabcolsep}{4pt}
\begin{tabular}{ll>{\raggedright\arraybackslash}p{3.3cm}rrrr}
\toprule
Condition & Target & Sources kept & $d$ & Recorded & Realized & 95\% CI \\
\midrule
Qwen3-32B   & \task{WHY}  & \task{CLZ}, \task{DEF}, \task{ERR}, \task{MHOP} & $4$ & $+2.64$  & $-0.90$  & $[-4.73,+2.92]$ \\
Qwen3-32B   & \task{MHOP} & \task{CLZ}, \task{DEF}, \task{MATCH}, \task{WHY} & $4$ & $+9.92$  & $+11.89$ & $[+6.99,+16.79]$ \\
Qwen3-32B   & \task{ERR}  & \task{DEF}, \task{MATCH}, \task{SC-MC}, \task{WHY} & $4$ & $+3.53$  & $+7.96$  & $[+6.76,+9.16]$ \\
Mistral-24B & \task{WHY}  & \task{CLZ}, \task{DEF}, \task{MC-MC}, \task{MHOP}, \task{SC-MC} & $5$ & $+3.79$  & $+2.49$  & $[+1.16,+3.81]$ \\
Mistral-24B & \task{MHOP} & \task{CLZ}, \task{DEF}, \task{MC-MC} & $3$ & $+14.86$ & $+13.76$ & $[+3.78,+23.73]$ \\
Mistral-24B & \task{ERR}  & all but \task{MHOP} & $6$ & $+4.95$  & $+8.56$  & $[+6.65,+10.48]$ \\
PubMed      & \task{WHY}  & \task{CLZ}, \task{ERR}, \task{MHOP} & $3$ & $+1.18$  & $-0.95$  & $[-9.84,+7.94]$ \\
PubMed      & \task{MHOP} & \task{CLZ}, \task{DEF}, \task{MATCH}, \task{MC-MC}, \task{SC-MC} & $5$ & $+2.68$  & $+3.10$  & $[+0.29,+5.92]$ \\
\midrule
PubMed      & \task{ERR}  & all seven & $7$ & $+0.00$  & \multicolumn{2}{l}{excluded: vacuous} \\
\bottomrule
\end{tabular}
\end{table}

A predictor that knows only which targets respond strongly is a per-target
constant, and removing each target's mean from both columns sends it to zero.
What remains is the variation across conditions within a target, and there the
recorded predictions still track the realized effects at $r=0.951$ with MAE
$0.79\pp$, against $2.26\pp$ for a per-target constant. Three targets over eight cells leave five effective
degrees of freedom, and centering makes the two-cell \task{ERR} group
anticorrelate by construction. The check therefore rules target identity out
as the carrier without being an independent test.
Table~\ref{tab:cv} is the within-target test, over $53$ mixtures per cell.

The recorded selections came from fits with one row per run, retrainings
included. Refitting each cell with one row per distinct mixture keeps the same
sources in six of the nine cells. Each of the three differences turns on
coefficients within $0.5\pp$ of zero: Mistral/\task{WHY} would drop
\task{SC-MC}, PubMed/\task{WHY} would add \task{MATCH} and \task{SC-MC}, and
PubMed/\task{MHOP} would swap \task{MC-MC} for \task{WHY}.

Figure~\ref{fig:absolute} puts the same cells on an absolute scale.

\begin{figure}[h]
  \centering
  \includegraphics[width=\textwidth]{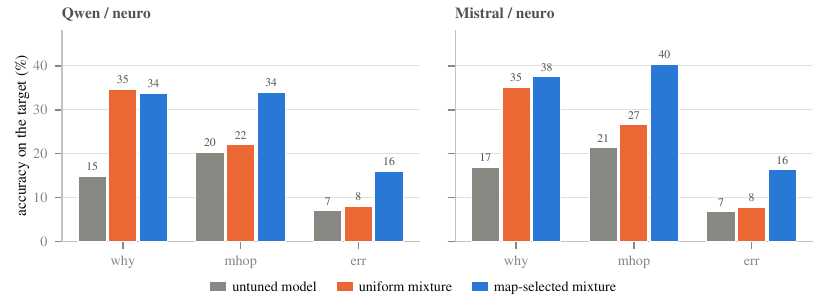}
  \caption{\textbf{The effects on an absolute scale.} Untuned base, the uniform
  mixture and the map-selected mixture on each withheld target, for Qwen3-32B
  and Mistral-24B, without the PubMed control. Where
  uniform already captures most of the available gain (\task{WHY}) the map adds
  little; where uniform barely improves on the untuned model (\task{MHOP},
  \task{ERR}) it adds the most.}
  \label{fig:absolute}
\end{figure}

\clearpage
\section{Identification}
\label{app:identification}

\begin{proof}[Proof of Lemma~\ref{lem:alias}]
For (a), for every source set $M$ with $d=|M|$,
\begin{align*}
\sum_S(\beta_S+c)z_S+(\gamma-c)d
&=\sum_S\beta_Sz_S+c\sum_Sz_S+\gamma d-cd\\
&=\sum_S\beta_Sz_S+\gamma d ,
\end{align*}
because $\sum_S z_S = d$. Fitted values and likelihood are therefore
identical for every $c$, so $\gamma$ is not separately estimable. Centering
subtracts the same $c$ from the coefficient mean, which is why the centered
substitution effects reported throughout are invariant to the choice.

For (b), the ordinary least squares (OLS) residuals of Equation~\ref{eq:additive} are orthogonal to
every $z_S$ and hence to $d$, so
$\mathrm{Cov}(y_T,d)=\sum_S\hat\beta_S\,\mathrm{Cov}(z_S,d)$. Dividing by
$\mathrm{Var}(d)=\sum_S\mathrm{Cov}(z_S,d)$ gives the slope and shows that the
weights sum to one.
\end{proof}

Figure~\ref{fig:dose} plots the measured slope against the mean source effect.
The alias recurs at every order. Since
$\sum_{S<S'}z_Sz_{S'}=d(d-1)/2$, a quadratic polynomial in $d$ absorbs a common
shift of every pairwise coefficient; more generally the order-$k$
elementary symmetric sum of binary presence indicators equals $\binom{d}{k}$, so
no fit separates a generic degree-$k$ size component from the mean order-$k$
interaction without an external restriction, and the data identify only
\emph{heterogeneous} nonadditivity.

This leaves Table~\ref{tab:cv} untouched, because it fits count-only and
presence-only representations separately and compares their leave-one-out predictions
without interpreting a count slope beside presence indicators. The data
identify source-specific pairwise interactions, and these improve leave-one-out
prediction on
both neuroscience conditions (Table~\ref{tab:cv}). Selecting from the pairwise
fit changes the out-of-fold gain of Section~\ref{sec:scope} from $+5.03$ to
$+5.37\pp$ on Qwen3-32B, from $+5.82$ to $+4.93\pp$ on Mistral-24B and from
$-0.51$ to $-0.28\pp$ on PubMed, so the interactions do not make a consistently
better selector.

\begin{table}[h]
\centering
\caption{Leave-one-mixture-out prediction of the withheld target,
between-target variance removed (Section~\ref{sec:sizes}). MAE is against a
per-target constant scoring $5.16/4.05/2.81\pp$. Mixture size gets every
advantage: linear, quadratic, and a free indicator per size. \emph{Share} adds
each source's share of the training examples, $z_S/d$. \emph{Pairwise} adds all
$21$ products $z_S z_{S'}$: $29$ parameters against $52$ training mixtures per
fold.}
\label{tab:cv}
\footnotesize
\begin{tabular}{lrrrrrr}
\toprule
& \multicolumn{2}{c}{Qwen3-32B} & \multicolumn{2}{c}{Mistral-24B} & \multicolumn{2}{c}{PubMed} \\
\cmidrule(lr){2-3}\cmidrule(lr){4-5}\cmidrule(lr){6-7}
Model of the mixture & $R^2$ & MAE & $R^2$ & MAE & $R^2$ & MAE \\
\midrule
Size, linear        & $0.114$ & $4.66$ & $0.003$  & $3.92$ & $0.153$ & $2.52$ \\
Size, quadratic     & $0.085$ & $4.75$ & $-0.027$ & $3.96$ & $0.121$ & $2.57$ \\
Size, categorical   & $-0.019$& $5.04$ & $-0.130$ & $4.19$ & $0.037$ & $2.67$ \\
Source presence     & $0.768$ & $2.44$ & $0.746$ & $1.82$ & $\mathbf{0.557}$ & $1.95$ \\
Presence $+$ share  & $0.744$ & $2.29$ & $0.702$  & $1.80$ & $0.523$ & $1.98$ \\
Presence $+$ pairwise & $\mathbf{0.840}$ & $1.73$ & $\mathbf{0.849}$ & $1.34$ & $0.522$ & $1.95$ \\
\bottomrule
\end{tabular}
\end{table}

\begin{table}[h]
\centering
\caption{The dose response is the mean source effect. The slope in $d$ is
OLS over the unique mixtures that withhold the target; $\overline{\beta}$
averages the presence-model coefficients from the same fit. Across all
$12$ cells they agree at $r=0.898$, mean absolute
difference $0.46\pp$, maximum $1.32\pp$.
\task{DEF} is included to show the slope is target-specific and can be
negative.}
\label{tab:dose}
\footnotesize
\begin{tabular}{llcrrr}
\toprule
Condition & Target & Accuracy at $d=1$\,/\,\dots\,/\,$6$ & Slope in $d$ & $\overline{\beta}$ & Diff. \\
\midrule
Qwen3-32B & \task{WHY} & $21.3$\,/\,$24.0$\,/\,$28.5$\,/\,$30.5$\,/\,$32.5$\,/\,$36.4$ & $+2.91$ & $+2.40$ & $+0.51$ \\
Qwen3-32B & \task{MHOP} & $24.7$\,/\,$24.9$\,/\,$26.2$\,/\,$28.2$\,/\,$28.2$\,/\,$32.1$ & $+1.15$ & $-0.17$ & $+1.32$ \\
Qwen3-32B & \task{ERR} & $8.8$\,/\,$9.9$\,/\,$10.2$\,/\,$11.9$\,/\,$12.4$\,/\,$14.6$ & $+0.96$ & $+0.40$ & $+0.56$ \\
Qwen3-32B & \task{DEF} & $32.0$\,/\,$31.4$\,/\,$30.6$\,/\,$29.5$\,/\,$28.8$\,/\,$29.0$ & $-0.80$ & $-0.68$ & $-0.13$ \\
\addlinespace
Mistral-24B & \task{WHY} & $28.2$\,/\,$30.1$\,/\,$33.1$\,/\,$33.3$\,/\,$34.4$\,/\,$35.1$ & $+1.50$ & $+1.29$ & $+0.21$ \\
Mistral-24B & \task{MHOP} & $31.2$\,/\,$30.9$\,/\,$31.1$\,/\,$31.1$\,/\,$30.9$\,/\,$33.7$ & $+0.06$ & $-1.26$ & $+1.32$ \\
Mistral-24B & \task{ERR} & $10.3$\,/\,$11.2$\,/\,$10.9$\,/\,$12.4$\,/\,$12.9$\,/\,$15.9$ & $+0.70$ & $-0.02$ & $+0.72$ \\
Mistral-24B & \task{DEF} & $29.5$\,/\,$29.8$\,/\,$28.7$\,/\,$27.8$\,/\,$28.1$\,/\,$28.9$ & $-0.47$ & $-0.56$ & $+0.09$ \\
\addlinespace
PubMed & \task{WHY} & $28.7$\,/\,$31.9$\,/\,$33.4$\,/\,$35.0$\,/\,$38.4$\,/\,$39.3$ & $+2.16$ & $+1.89$ & $+0.27$ \\
PubMed & \task{MHOP} & $33.0$\,/\,$33.1$\,/\,$33.8$\,/\,$34.8$\,/\,$33.9$\,/\,$38.7$ & $+0.49$ & $+0.17$ & $+0.31$ \\
PubMed & \task{ERR} & $8.1$\,/\,$8.4$\,/\,$9.2$\,/\,$9.8$\,/\,$10.3$\,/\,$9.3$ & $+0.53$ & $+0.51$ & $+0.02$ \\
PubMed & \task{DEF} & $26.3$\,/\,$26.9$\,/\,$26.9$\,/\,$27.2$\,/\,$26.2$\,/\,$26.6$ & $-0.02$ & $-0.00$ & $-0.02$ \\
\bottomrule
\end{tabular}
\end{table}

The $d=6$ column is the design's own six-source pool, which omits the target
and its design partner. For \task{MHOP} and \task{ERR} on
Qwen3-32B and Mistral-24B that partner is the target's interferer, so the $d=6$
accuracy sits above the seven-source uniform mixture: Qwen/\task{MHOP} reads
$32.1\%$ at $d=6$ against $22.0\%$ for the uniform run. The dose curve is
therefore not a ceiling for the uniform mixture, and the comparison that
matters for selection is the one of Section~\ref{sec:predict}, at matched
budget with the same seven candidates.

The correlation runs across cells spanning $-0.80$ (\task{DEF}) to $+2.91\pp$
(Qwen/\task{WHY}), and the two \task{MHOP} cells differ in sign while agreeing
in magnitude to within $1.32\pp$. The slope exceeds $\overline{\beta}$ in $10$ of
the $12$ cells. The dose curve is an unweighted marginal over mixtures that are
not balanced across $d$, so the two need not agree exactly.

\begin{figure}[h]
  \centering
  \includegraphics[width=\textwidth]{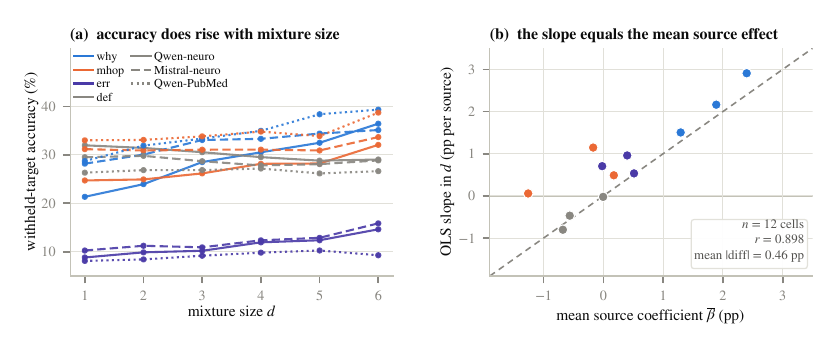}
  \caption{\textbf{Lemma~\ref{lem:alias} as a measurement.}
  \textbf{(a)} Withheld-target accuracy against mixture size. It rises for the
  reasoning targets except Mistral/\task{MHOP}, which is flat, and falls for
  \task{DEF} in both neuroscience conditions. \textbf{(b)} Each cell's OLS slope
  in $d$ against the mean source coefficient of the presence fit for the same
  cell. The points lie near the identity line: the quantity one would report as a
  breadth bonus is the average source effect.}
  \label{fig:dose}
\end{figure}

\clearpage
\section{Directedness of the Map}
\label{app:asymmetry}

Recomputed by \texttt{scripts/run\_asymmetry.py} from the same distinct-mixture
fits reported in Section~\ref{sec:map}, so the coefficients here and there are
one fit. The null tested is symmetry itself
($\beta_{S\to T}=\beta_{T\to S}$); intervals are the percentile bootstrap over
distinct mixtures with a fixed seed. A permutation null over off-diagonal
entries is the wrong reference for this question. Shuffling makes the two
directions independent ($r\approx0$), so it tests whether the map is more
symmetric than noise. Our question is whether the map departs from symmetry. The pairwise intervals carry the claim.

\begin{table}[h]
\centering
\caption{Asymmetry of the transfer map. $r(B,B^{\mathsf{T}})$ is $1$ for
symmetric transfer and is computed over the $56$ off-diagonal entries, that is, $28$ pairs in both orientations. The skew fraction,
$\lVert (B-B^{\mathsf{T}})/2\rVert_F/\lVert B\rVert_F$ over the off-diagonal
entries, equals $\sqrt{(1-r)/2}$ here, so it restates $r$; independent noise
sits near $0.707$. \emph{Directional} counts pairs whose difference interval
excludes zero. The last three columns count pairs whose two directions have
opposite signs: as point estimates, with the difference interval also
excluding zero, and with each direction's own interval excluding zero (the
count Section~\ref{sec:asym} reports).}
\label{tab:asym}
\footnotesize\setlength{\tabcolsep}{4pt}
\begin{tabular}{lrrrrrr}
\toprule
& & & & \multicolumn{3}{c}{Opposite signs} \\
\cmidrule(lr){5-7}
Condition & $r(B,B^{\mathsf{T}})$ & Skew & Directional & Estimates & Diff.\ CI & Both CIs \\
\midrule
Qwen3-32B & $+0.304$ $[+0.177,+0.409]$ & $0.590$ & $14/28$ & $10/28$ & $7/28$ & $3/28$ \\
Mistral-24B & $+0.290$ $[+0.167,+0.381]$ & $0.596$ & $13/28$ & $15/28$ & $10/28$ & $4/28$ \\
PubMed & $+0.262$ $[+0.052,+0.393]$ & $0.607$ & $9/28$ & $10/28$ & $3/28$ & $0/28$ \\
\bottomrule
\end{tabular}
\end{table}

On Qwen3-32B, three pairs meet the strictest count:
\task{DEF}/\task{MHOP}, \task{ERR}/\task{WHY} and \task{WHY}/\task{DEF}
(Table~\ref{tab:asympairs}). On Mistral-24B, four do: \task{WHY}/\task{MHOP},
\task{ERR}/\task{WHY}, \task{WHY}/\task{DEF} and \task{MATCH}/\task{SC-MC}; the
last has entries of only $-0.32$ and $+0.20\pp$ and involves the near-ceiling
\task{MATCH}. PubMed has none. Every opposite-sign pair that meets the difference-interval
count but not the strictest one has an entry inside $\pm0.5\pp$ or involves
\task{MATCH}. Table~\ref{tab:asympairs} lists
every pair for the Qwen3-32B condition, ordered by the size of the difference
between directions; the other two conditions are in
\texttt{results/asymmetry.md}. The largest asymmetry is the same pair in all
three conditions: \task{WHY}$\to$\task{MHOP} minus \task{MHOP}$\to$\task{WHY}
is $-8.16\pp$ $[-10.51,-5.71]$ on Qwen3-32B, $-7.09\pp$ $[-9.01,-5.20]$ on
Mistral-24B and $-4.81\pp$ $[-6.89,-2.52]$ on PubMed.

\begin{table}[h]
\centering
\caption{Qwen3-32B: every task pair $(A, B)$, both directions. \emph{First} is
$\beta_{A\to B}$ and \emph{second} is $\beta_{B\to A}$, as centered substitution
effects in pp; the difference is first minus second. All intervals are $95\%$
percentile bootstrap intervals over distinct mixtures. $^{*}$ marks the pairs
whose two directions have opposite signs with each direction's interval
excluding zero.}
\label{tab:asympairs}
\footnotesize\setlength{\tabcolsep}{3pt}
\begin{tabular}{lrlrlrl}
\toprule
Pair $(A,B)$ & First & 95\% CI & Second & 95\% CI & Difference & 95\% CI \\
\midrule
\task{WHY}, \task{MHOP} & $+2.55$ & $[+1.20,+4.02]$ & $+10.71$ & $[+8.65,+12.52]$ & $-8.16$ & $[-10.51,-5.71]$ \\
\task{MHOP}, \task{DEF}$^{*}$ & $-1.24$ & $[-1.68,-0.75]$ & $+5.27$ & $[+3.55,+6.84]$ & $-6.51$ & $[-8.17,-4.74]$ \\
\task{ERR}, \task{MHOP} & $-9.72$ & $[-11.29,-7.93]$ & $-3.85$ & $[-4.81,-2.89]$ & $-5.88$ & $[-7.77,-3.82]$ \\
\task{ERR}, \task{WHY}$^{*}$ & $-2.37$ & $[-4.18,-0.61]$ & $+3.11$ & $[+2.35,+3.90]$ & $-5.48$ & $[-7.59,-3.58]$ \\
\task{WHY}, \task{DEF}$^{*}$ & $-2.25$ & $[-2.69,-1.79]$ & $+2.49$ & $[+0.32,+4.48]$ & $-4.73$ & $[-6.77,-2.61]$ \\
\task{WHY}, \task{MC-MC} & $-0.29$ & $[-0.54,-0.06]$ & $-3.72$ & $[-5.56,-1.86]$ & $+3.43$ & $[+1.50,+5.29]$ \\
\task{WHY}, \task{MATCH} & $+0.05$ & $[-0.06,+0.16]$ & $-2.72$ & $[-4.62,-0.77]$ & $+2.76$ & $[+0.81,+4.72]$ \\
\task{WHY}, \task{SC-MC} & $-0.05$ & $[-0.17,+0.07]$ & $-2.53$ & $[-4.44,-0.69]$ & $+2.48$ & $[+0.63,+4.39]$ \\
\task{WHY}, \task{CLZ} & $-0.13$ & $[-0.31,+0.03]$ & $-1.86$ & $[-3.86,+0.42]$ & $+1.74$ & $[-0.53,+3.76]$ \\
\task{ERR}, \task{DEF} & $+0.69$ & $[+0.25,+1.12]$ & $+2.33$ & $[+1.32,+3.25]$ & $-1.64$ & $[-2.66,-0.55]$ \\
\task{MC-MC}, \task{DEF} & $+1.16$ & $[+0.76,+1.60]$ & $+0.29$ & $[-0.00,+0.60]$ & $+0.87$ & $[+0.35,+1.40]$ \\
\task{MHOP}, \task{MC-MC} & $+0.69$ & $[+0.41,+0.97]$ & $-0.09$ & $[-1.55,+1.37]$ & $+0.78$ & $[-0.72,+2.25]$ \\
\task{MHOP}, \task{CLZ} & $+0.50$ & $[+0.32,+0.65]$ & $+1.27$ & $[+0.00,+2.88]$ & $-0.78$ & $[-2.41,+0.52]$ \\
\task{MHOP}, \task{MATCH} & $+0.07$ & $[-0.07,+0.19]$ & $+0.75$ & $[-0.83,+2.36]$ & $-0.69$ & $[-2.29,+0.92]$ \\
\task{SC-MC}, \task{DEF} & $+0.66$ & $[+0.26,+1.03]$ & $+0.10$ & $[-0.03,+0.22]$ & $+0.56$ & $[+0.15,+0.95]$ \\
\task{MATCH}, \task{DEF} & $+0.35$ & $[-0.06,+0.72]$ & $-0.04$ & $[-0.17,+0.11]$ & $+0.39$ & $[-0.03,+0.78]$ \\
\task{MHOP}, \task{SC-MC} & $+0.29$ & $[+0.15,+0.43]$ & $-0.03$ & $[-1.50,+1.51]$ & $+0.32$ & $[-1.25,+1.78]$ \\
\task{ERR}, \task{CLZ} & $-0.27$ & $[-0.43,-0.11]$ & $-0.58$ & $[-1.44,+0.36]$ & $+0.30$ & $[-0.64,+1.16]$ \\
\task{MATCH}, \task{SC-MC} & $-0.26$ & $[-0.41,-0.13]$ & $+0.02$ & $[-0.10,+0.14]$ & $-0.28$ & $[-0.48,-0.10]$ \\
\task{MATCH}, \task{CLZ} & $-0.18$ & $[-0.32,-0.03]$ & $+0.07$ & $[-0.07,+0.20]$ & $-0.25$ & $[-0.44,-0.05]$ \\
\task{SC-MC}, \task{CLZ} & $-0.20$ & $[-0.34,-0.08]$ & $+0.03$ & $[-0.12,+0.20]$ & $-0.23$ & $[-0.46,-0.03]$ \\
\task{MATCH}, \task{MC-MC} & $+0.28$ & $[-0.01,+0.59]$ & $+0.12$ & $[+0.01,+0.24]$ & $+0.16$ & $[-0.16,+0.50]$ \\
\task{ERR}, \task{MATCH} & $-0.29$ & $[-0.40,-0.17]$ & $-0.14$ & $[-0.94,+0.72]$ & $-0.15$ & $[-1.01,+0.65]$ \\
\task{ERR}, \task{SC-MC} & $-0.07$ & $[-0.20,+0.06]$ & $-0.18$ & $[-0.99,+0.72]$ & $+0.11$ & $[-0.81,+0.90]$ \\
\task{MC-MC}, \task{CLZ} & $-0.26$ & $[-0.41,-0.11]$ & $-0.34$ & $[-0.68,+0.02]$ & $+0.08$ & $[-0.33,+0.46]$ \\
\task{CLZ}, \task{DEF} & $+0.62$ & $[+0.07,+1.13]$ & $+0.54$ & $[+0.36,+0.74]$ & $+0.08$ & $[-0.49,+0.62]$ \\
\task{ERR}, \task{MC-MC} & $-0.62$ & $[-0.92,-0.29]$ & $-0.69$ & $[-1.44,+0.10]$ & $+0.08$ & $[-0.79,+0.91]$ \\
\task{MC-MC}, \task{SC-MC} & $-0.03$ & $[-0.16,+0.10]$ & $-0.02$ & $[-0.41,+0.30]$ & $-0.01$ & $[-0.36,+0.40]$ \\
\bottomrule
\end{tabular}
\end{table}

\clearpage
\section{Selector Comparison, Full Sweep}
\label{app:selectors}

Table~\ref{tab:sweepfull} lists every competitor at every trained mixture size
$d$, under one fixed training seed.
Positive favours the map-selected mixture. \emph{other-6} is the macro over the
six non-target, non-\task{MATCH} tasks and reads as a retention cost.
$\dagger$ marks a run whose generations tripped the degeneration check of
Appendix~\ref{app:details}. PubMed/\task{ERR} is the vacuous cell, so its interferer column reads n/a
(Table~\ref{tab:selectors}).

Admitting an interferer is not the only way a selector loses. At its matched
size the Mistral/\task{MHOP} similarity mixture excludes \task{ERR} and
still trails the map by $9.68\pp$, because it omits \task{DEF}, the source
with the largest positive entry for that target. On the same cell, TaskShop at
$d=5$ keeps all three of the map's sources and still trails by $8.42\pp$ after
adding two that the map drops, \task{SC-MC} and \task{WHY}.
Skill-It edges the map on Qwen/\task{WHY}, Qwen/\task{ERR} and
Mistral/\task{WHY}, by at most $1.45\pp$, and loses $17.54\pp$ on
PubMed/\task{WHY}, where its mixture holds that target's interferer.

Selected mixtures are narrower than the uniform default. Over the
out-of-fold selections of Section~\ref{sec:scope} the map keeps $3.2$,
$3.0$ and $4.1$ sources on average in the three conditions, against seven;
over the nine predicted cells it keeps three to seven (Table~\ref{tab:cells}).
At the fixed $6{,}000$-example budget that is $1.2$ to $2.3\times$ as many
examples per source in the eight cells where the selection is narrower.

\begin{figure}[h]
  \centering
  \includegraphics[width=\textwidth]{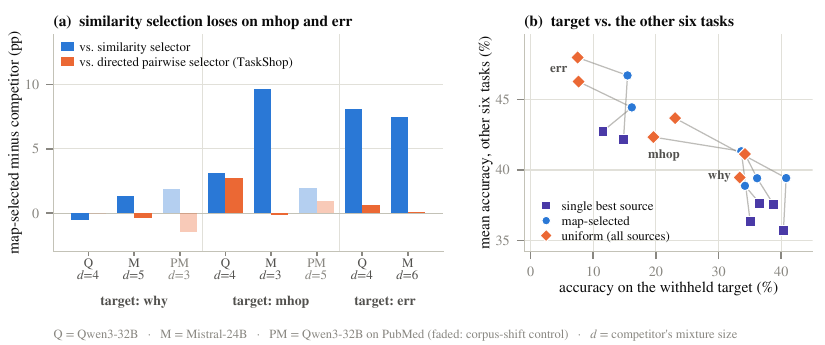}
  \caption{\textbf{The similarity selector loses where selection matters.}
  \textbf{(a)} Map-selected minus competitor accuracy on the withheld
  target, each competitor at its own mixture size $d$. \textbf{(b)} Accuracy
  on the target against the non-target macro (the mean over the six scored
  tasks other than the target and \task{MATCH}): the map-selected mixture
  sits between the uniform mixture and the single best source.}
  \label{fig:selectors}
\end{figure}

\begin{center}
\footnotesize
\begin{longtable}{lllrlrr}
\caption{Map minus competitor, on the withheld target and on the non-target
macro.}\label{tab:sweepfull}\\
\toprule
Cond. & Target & Competitor & $d$ & Interferer in mix & target & other-6 \\
\midrule
\endfirsthead
\toprule
Cond. & Target & Competitor & $d$ & Interferer in mix & target & other-6 \\
\midrule
\endhead
\bottomrule
\endfoot
Qwen & \task{WHY} & \texttt{uniform} & $7$ & yes (\task{MC-MC}) & $+1.21$ & $-0.75$ \\
Qwen & \task{WHY} & \texttt{single\_best} & $1$ & no & $-2.37$ & $+1.25$ \\
Qwen & \task{WHY} & \texttt{skillit} & $4$ & no & $-1.45$ & $+1.41$ \\
Qwen & \task{WHY} & \texttt{taskshop\_k3} & $3$ & no & $-1.36$ & $+0.04$ \\
Qwen & \task{WHY} & \texttt{taskshop} & $4$ & no & $+0.00$ & $+0.00$ \\
Qwen & \task{WHY} & \texttt{taskshop\_k5} & $5$ & no & $-2.13$ & $-0.53$ \\
Qwen & \task{WHY} & \texttt{embed\_sim\_k3} & $3$ & no & $-1.45$ & $-0.46$ \\
Qwen & \task{WHY} & \texttt{embed\_sim} & $4$ & no & $-0.53$ & $-0.01$ \\
Qwen & \task{WHY} & \texttt{embed\_sim\_k5} & $5$ & yes (\task{MC-MC}) & $-0.63$ & $-0.33$ \\
\addlinespace
Qwen & \task{MHOP} & \texttt{uniform} & $7$ & yes (\task{ERR}) & $+13.19$ & $-1.28$ \\
Qwen & \task{MHOP} & \texttt{single\_best} & $1$ & no & $-1.41$$^{\dagger}$ & $+4.97$ \\
Qwen & \task{MHOP} & \texttt{skillit} & $4$ & no & $+2.43$$^{\dagger}$ & $+0.78$ \\
Qwen & \task{MHOP} & \texttt{taskshop\_k3} & $3$ & no & $-1.07$ & $-0.01$ \\
Qwen & \task{MHOP} & \texttt{taskshop} & $4$ & no & $+2.73$ & $-0.22$ \\
Qwen & \task{MHOP} & \texttt{taskshop\_k5} & $5$ & no & $+1.85$ & $-0.21$ \\
Qwen & \task{MHOP} & \texttt{embed\_sim\_k3} & $3$ & no & $+3.41$ & $-0.04$ \\
Qwen & \task{MHOP} & \texttt{embed\_sim} & $4$ & no & $+3.11$ & $-0.04$ \\
Qwen & \task{MHOP} & \texttt{embed\_sim\_k5} & $5$ & yes (\task{ERR}) & $+12.90$ & $-0.45$ \\
\addlinespace
Qwen & \task{ERR} & \texttt{uniform} & $7$ & yes (\task{MHOP}) & $+8.64$ & $-1.50$ \\
Qwen & \task{ERR} & \texttt{single\_best} & $1$ & no & $+2.11$$^{\dagger}$ & $+4.58$ \\
Qwen & \task{ERR} & \texttt{skillit} & $4$ & no & $-0.20$ & $+0.25$ \\
Qwen & \task{ERR} & \texttt{taskshop\_k3} & $3$ & no & $+0.88$ & $-0.14$ \\
Qwen & \task{ERR} & \texttt{taskshop} & $4$ & no & $+0.64$ & $+0.33$ \\
Qwen & \task{ERR} & \texttt{taskshop\_k5} & $5$ & no & $+0.98$ & $+0.12$ \\
Qwen & \task{ERR} & \texttt{embed\_sim\_k3} & $3$ & yes (\task{MHOP}) & $+8.35$ & $-1.31$ \\
Qwen & \task{ERR} & \texttt{embed\_sim} & $4$ & yes (\task{MHOP}) & $+8.10$ & $-1.77$ \\
Qwen & \task{ERR} & \texttt{embed\_sim\_k5} & $5$ & yes (\task{MHOP}) & $+8.99$ & $-0.82$ \\
\addlinespace
Mistral & \task{WHY} & \texttt{uniform} & $7$ & yes (\task{ERR}) & $+2.18$ & $-1.50$ \\
Mistral & \task{WHY} & \texttt{single\_best} & $1$ & no & $-2.13$ & $+1.64$ \\
Mistral & \task{WHY} & \texttt{skillit} & $5$ & no & $-1.31$ & $+0.59$ \\
Mistral & \task{WHY} & \texttt{taskshop\_k3} & $3$ & no & $-0.63$ & $+1.14$ \\
Mistral & \task{WHY} & \texttt{taskshop} & $5$ & no & $-0.34$ & $-0.25$ \\
Mistral & \task{WHY} & \texttt{embed\_sim\_k3} & $3$ & yes (\task{ERR}) & $+1.45$ & $-0.82$ \\
Mistral & \task{WHY} & \texttt{embed\_sim} & $5$ & yes (\task{ERR}) & $+1.36$ & $-1.63$ \\
\addlinespace
Mistral & \task{MHOP} & \texttt{uniform} & $7$ & yes (\task{ERR}) & $+17.23$ & $-4.25$ \\
Mistral & \task{MHOP} & \texttt{single\_best} & $1$ & no & $+0.44$$^{\dagger}$ & $+3.62$ \\
Mistral & \task{MHOP} & \texttt{skillit} & $3$ & no & $+7.59$ & $+2.33$ \\
Mistral & \task{MHOP} & \texttt{taskshop} & $3$ & no & $-0.15$ & $+0.11$ \\
Mistral & \task{MHOP} & \texttt{taskshop\_k5} & $5$ & no & $+8.42$ & $-4.06$ \\
Mistral & \task{MHOP} & \texttt{embed\_sim} & $3$ & no & $+9.68$ & $-1.84$ \\
Mistral & \task{MHOP} & \texttt{embed\_sim\_k5} & $5$ & yes (\task{ERR}) & $+15.91$ & $-4.13$ \\
\addlinespace
Mistral & \task{ERR} & \texttt{uniform} & $7$ & yes (\task{MHOP}) & $+7.76$ & $-1.29$ \\
Mistral & \task{ERR} & \texttt{single\_best} & $1$ & no & $+1.42$ & $+4.98$ \\
Mistral & \task{ERR} & \texttt{skillit} & $6$ & yes (\task{MHOP}) & $+7.91$ & $-0.66$ \\
Mistral & \task{ERR} & \texttt{taskshop\_k3} & $3$ & no & $-0.34$ & $+0.99$ \\
Mistral & \task{ERR} & \texttt{taskshop} & $6$ & no & $+0.10$ & $+0.14$ \\
Mistral & \task{ERR} & \texttt{taskshop\_k5} & $5$ & no & $-1.23$ & $+0.61$ \\
Mistral & \task{ERR} & \texttt{embed\_sim\_k3} & $3$ & yes (\task{MHOP}) & $+8.25$ & $+0.48$ \\
Mistral & \task{ERR} & \texttt{embed\_sim} & $6$ & yes (\task{MHOP}) & $+7.47$ & $-1.38$ \\
Mistral & \task{ERR} & \texttt{embed\_sim\_k5} & $5$ & yes (\task{MHOP}) & $+7.86$ & $-0.44$ \\
\addlinespace
PubMed & \task{WHY} & \texttt{uniform} & $7$ & yes (\task{MC-MC}) & $+0.00$ & $-3.02$ \\
PubMed & \task{WHY} & \texttt{single\_best} & $1$ & no & $+4.27$ & $+0.46$ \\
PubMed & \task{WHY} & \texttt{skillit} & $3$ & yes (\task{MC-MC}) & $+17.54$ & $+3.00$ \\
PubMed & \task{WHY} & \texttt{taskshop} & $3$ & no & $-1.42$ & $-2.79$ \\
PubMed & \task{WHY} & \texttt{taskshop\_k5} & $5$ & yes (\task{MC-MC}) & $+2.84$ & $-2.83$ \\
PubMed & \task{WHY} & \texttt{embed\_sim} & $3$ & yes (\task{MC-MC}) & $+1.90$ & $-0.27$ \\
PubMed & \task{WHY} & \texttt{embed\_sim\_k5} & $5$ & yes (\task{MC-MC}) & $+4.74$ & $-1.24$ \\
\addlinespace
PubMed & \task{MHOP} & \texttt{uniform} & $7$ & yes (\task{ERR}) & $+2.45$ & $-5.58$ \\
PubMed & \task{MHOP} & \texttt{single\_best} & $1$ & no & $+4.90$ & $+0.13$ \\
PubMed & \task{MHOP} & \texttt{skillit} & $5$ & no & $+1.47$ & $-4.06$ \\
PubMed & \task{MHOP} & \texttt{taskshop\_k3} & $3$ & no & $-0.49$ & $-1.00$ \\
PubMed & \task{MHOP} & \texttt{taskshop} & $5$ & no & $+0.98$ & $+0.14$ \\
PubMed & \task{MHOP} & \texttt{embed\_sim\_k3} & $3$ & no & $+5.39$ & $-1.55$ \\
PubMed & \task{MHOP} & \texttt{embed\_sim} & $5$ & no & $+1.96$ & $-3.98$ \\
\addlinespace
PubMed & \task{ERR} & \texttt{uniform} & $7$ & n/a & $+1.69$ & $+0.24$ \\
PubMed & \task{ERR} & \texttt{single\_best} & $1$ & n/a & $+0.00$ & $+6.96$ \\
PubMed & \task{ERR} & \texttt{taskshop\_k3} & $3$ & n/a & $+1.69$ & $-0.49$ \\
PubMed & \task{ERR} & \texttt{taskshop\_k5} & $5$ & n/a & $+1.27$ & $+1.43$ \\
PubMed & \task{ERR} & \texttt{embed\_sim\_k3} & $3$ & n/a & $+2.11$ & $+2.15$ \\
PubMed & \task{ERR} & \texttt{embed\_sim\_k5} & $5$ & n/a & $+0.84$ & $+3.31$ \\
\end{longtable}
\end{center}

\section{Training-Order Experiment}
\label{app:order}

We test whether the map also predicts the effect of training order. Each pair
trains two sources in two consecutive blocks, once in each order, and we score it
on one target. The recorded prediction is that target accuracy is higher when the
source with the larger map effect on the target (the \emph{better} source) comes
last, by half the transfer differential, the difference between the two sources'
map effects on the target.
Table~\ref{tab:order} compares every pair's realized effect with this
prediction. We trained $99$ runs: $30$ in each block order, $30$ with the two
sources shuffled together (the default), and
$9$
runs of \texttt{P1} under a constant learning rate. Pairs \texttt{P1} to \texttt{P8} train Qwen3-4B,
and \texttt{P1r17} and \texttt{P1r8} repeat \texttt{P1} at Qwen3-1.7B and
Qwen3-8B, each scored against the map estimated at its own size. Within a pair the
two blocked runs train on a bit-identical multiset of examples (same rows,
same per-task budget, same steps, same seed), so training order is the only
difference between them. The realized effect is
the seed-paired target accuracy of the run that trains the better source \emph{last}
minus that of the run that trains it first, so the predicted sign is positive. Three
seeds per order put the minimum detectable effect at about
$1.97\pp$, and we call an interval that covers zero
unresolved and never read it as evidence of no effect. We recorded three pairs,
\texttt{P3}, \texttt{P6} and \texttt{P8}, as placebos before we measured their
transfer differentials; their predicted values apply the recorded rule to differentials measured
afterwards.

\begin{table}[h]
\centering
\caption{Order effect on the target for every pair, in percentage points.
\emph{Realized} is the seed-paired accuracy of the run that trains the better
source last minus that of the run that trains it first. \emph{Predicted} is half
the transfer differential, the difference between the two sources' map effects on
the target; for the placebos it was computed after the predictions were recorded. \emph{Confirmed}
means the interval excludes zero in the predicted direction, and \emph{reversed}
that it excludes zero against it. \emph{Last vs shuffled} is the run that trains
the better source last minus the run that trains the same examples shuffled
together, over the same three seeds. The verdict in the text counts the
seven rows of the first two blocks.}
\label{tab:order}
\scriptsize\setlength{\tabcolsep}{2.5pt}
\begin{tabular}{lrllrlll}
\toprule
Pair & Size & Sources & Target & Predicted & Realized [95\% CI] & Last vs shuffled [95\% CI] & Status \\
\midrule
\multicolumn{8}{l}{\emph{\texttt{P1} at three model sizes}} \\
\midrule
\texttt{P1} & 4B & \task{DEF} + \task{ERR} & \task{MHOP} & $+8.32$ & $\mathbf{+9.55}$ $[+6.89,+12.21]$ & $+7.75$ $[+4.48,+11.02]$ & confirmed \\
\texttt{P1r17} & 1.7B & \task{DEF} + \task{ERR} & \task{MHOP} & $+7.71$ & $\mathbf{+8.21}$ $[+7.11,+9.30]$ & $+4.72$ $[+2.91,+6.53]$ & confirmed \\
\texttt{P1r8} & 8B & \task{DEF} + \task{ERR} & \task{MHOP} & $+7.24$ & $\mathbf{+9.62}$ $[+2.17,+17.07]$ & $+7.04$ $[+3.36,+10.72]$ & confirmed \\
\midrule
\multicolumn{8}{l}{\emph{Other recorded pairs}} \\
\midrule
\texttt{P2} & 4B & \task{MHOP} + \task{SC-MC} & \task{WHY} & $+4.57$ & $\mathbf{-0.66}$ $[-1.74,+0.42]$ & $+0.97$ $[-1.72,+3.66]$ & covers $0$ \\
\texttt{P4} & 4B & \task{DEF} + \task{MC-MC} & \task{WHY} & $+4.55$ & $\mathbf{-5.51}$ $[-8.24,-2.77]$ & $-3.84$ $[-7.60,-0.09]$ & reversed \\
\texttt{P5} & 4B & \task{MHOP} + \task{MC-MC} & \task{WHY} & $+5.28$ & $\mathbf{-0.27}$ $[-3.01,+2.46]$ & $+1.28$ $[+0.93,+1.62]$ & covers $0$ \\
\texttt{P7} & 4B & \task{SC-MC} + \task{WHY} & \task{DEF} & $+3.47$ & $\mathbf{+2.08}$ $[-0.87,+5.03]$ & $-0.69$ $[-3.29,+1.90]$ & covers $0$ \\
\midrule
\multicolumn{8}{l}{\emph{Recorded as placebos}} \\
\midrule
\texttt{P3} & 4B & \task{CLZ} + \task{MC-MC} & \task{ERR} & $+0.01$ & $\mathbf{+0.18}$ $[-0.38,+0.74]$ & $-0.49$ $[-1.02,+0.04]$ & covers $0$ \\
\texttt{P6} & 4B & \task{MHOP} + \task{CLZ} & \task{WHY} & $+4.20$ & $\mathbf{-0.05}$ $[-1.82,+1.72]$ & $+1.16$ $[+0.01,+2.31]$ & covers $0$ \\
\texttt{P8} & 4B & \task{DEF} + \task{CLZ} & \task{WHY} & $+3.47$ & $\mathbf{-4.10}$ $[-7.84,-0.37]$ & $-3.68$ $[-7.02,-0.34]$ & reversed \\
\midrule
\multicolumn{8}{l}{\emph{Constant learning rate}} \\
\midrule
\texttt{P1} & 4B & \task{DEF} + \task{ERR} & \task{MHOP} & $+8.32$ & $\mathbf{+15.12}$ $[+10.68,+19.55]$ & $+12.15$ $[+5.54,+18.76]$ & confirmed \\
\bottomrule
\end{tabular}
\end{table}

Over the $7$
rows of the first two blocks the verdict is $3$
confirmed, $3$ covering zero and $1$
reversed. The three confirmed rows are one source pair at three sizes, so the
prediction holds for one of five source pairs. For \texttt{P1}, training \task{DEF}
last gains $+7.75\pp$ on \task{MHOP} over the shuffled default at the same training
cost; the last column of Table~\ref{tab:order} gives this contrast for every pair. Only
\texttt{P3} has a differential near zero, so it alone is a placebo in substance,
and its narrow interval covers zero. \texttt{P6} and \texttt{P8} have differentials
of $+8.40$ and
$+6.95\pp$, so they are placebos in name only; read as tests,
\texttt{P6} covers zero and \texttt{P8} reverses. The data refute the recorded
magnitude law, under which the effect is half the differential: the
intervals of \texttt{P2}, \texttt{P4}, \texttt{P5}, \texttt{P6} and \texttt{P8}
exclude their predicted values. The map therefore does not, in general, predict the
effect of training order. Under a constant learning rate the
\texttt{P1} effect grows from $+9.55$ to $+15.12\pp$. A decaying learning rate gives
the last block smaller updates and damps recency, so the effect is not an artifact of
the cosine schedule. The experiment uses one corpus, two-source blocks and three
seeds, and only \texttt{P1} was run at more than one model size or learning-rate
schedule. Two blocks are the strongest form of the
manipulation, and we expect finer interleaving to show smaller effects. Of the $36$
\task{DEF}$+$\task{ERR} runs, $25$ trip the \task{CLZ} generation
guardrail because, after two verbose sources, the model overruns the $1.4$-word
answers of the cloze task. \task{CLZ} is not the target of these runs and the
analysis reads only target accuracy, so we keep these runs and report them here.
\section{Experimental Details}
\label{app:details}

\paragraph{Evaluation inventory.}
Open tasks use the rule of Equation~\ref{eq:metric}; closed tasks use exact
structured matching.

\begin{table}[h]
\centering
\caption{Neuroscience evaluation inventory ($14{,}618$ items). The PubMed
condition uses the same task set over $1{,}464$ items.}
\label{tab:inventory}
\small
\begin{tabular}{llrrl}
\toprule
Task & Capability & Items & Ref.\ words & Scoring \\
\midrule
\task{DEF}   & definition with ontology fields & $2{,}166$ & $32.2$ & Eq.~\ref{eq:metric} \\
\task{CLZ}   & fill-in-the-blank span          & $1{,}265$ & $1.4$ & normalized exact match \\
\task{SC-MC} & one correct option              & $2{,}148$ & $7.8$ & option match \\
\task{MC-MC} & multiple correct options        & $2{,}032$ & $25.2$ & option-set match \\
\task{MATCH} & term--definition pairing        & $852$     & $7.9$ & pair-set match \\
\task{MHOP}  & two-evidence reasoning          & $2{,}055$ & $34.9$ & Eq.~\ref{eq:metric} \\
\task{WHY}   & causal/mechanistic explanation  & $2{,}064$ & $18.3$ & Eq.~\ref{eq:metric} \\
\task{ERR}   & methodological-validity critique& $2{,}036$ & $36.8$ & Eq.~\ref{eq:metric} \\
\bottomrule
\end{tabular}
\end{table}

\paragraph{The map does not track supervised token volume.}
Mean reference-answer length runs from $1.4$ words for \task{CLZ} to $36.8$ for
\task{ERR} (Table~\ref{tab:inventory}). Every run splits $6{,}000$ examples
equally and training masks the prompt, so a source's share of the supervised
tokens is proportional to that length, and a volume artifact would rank the
sources by it. The Spearman correlation between length and the centered
effects is $+0.010$, $-0.105$ and $+0.257$ by condition, largest on the
underpowered PubMed condition, and $+0.047$ over all $168$ entries. The two
sources with the largest mean effects, \task{DEF} and \task{MHOP}, sit at
$32.2$ and $34.9$ words, and the most negative, \task{ERR}, sits at $36.8$:
length orders none of them. This bounds supervised volume as an explanation;
passages remain unmatched (Section~\ref{sec:limitations}).

\paragraph{Probe designs.}
The four designs withhold $\{\task{MC-MC},\task{WHY}\}$,
$\{\task{MHOP},\task{ERR}\}$, $\{\task{DEF},\task{CLZ}\}$ and
$\{\task{SC-MC},\task{MATCH}\}$. Within a design, six cyclic windows over the
exposed sources balance source presence at each mixture size $d=1,\dots,5$, and
the $d=6$ pool trains once. Each condition trains $124$ run directories
covering $94$ distinct mixtures. The other $30$ directories repeat, at the same
seed, a mixture that another design also trains; on Mistral-24B they reproduce
every accuracy exactly, and on Qwen3-32B they reproduce $111$ of the $168$
duplicated mixture--task accuracies ($127$ on PubMed), the rest differing by at
most $0.79\pp$ ($3.27\pp$ on PubMed). Each reasoning target is absent in $76$ of
those directories, spanning $53$ distinct mixtures per condition. At the four
smaller sizes, one fixed $24$-run probe replaces the designs: two blocks withhold
$\{\task{MC-MC},\task{WHY}\}$ and $\{\task{MHOP},\task{ERR}\}$, and each trains
three cyclic windows over its six sources at each size $d=1,\dots,4$. The same
$20$ distinct mixtures train at every size, and each reasoning target's fit there
reads $18$ or $20$ runs. Because the generator samples task tags per passage, the
passages behind different task types overlap only partially: residual
passage-set Jaccard runs $0.51$--$0.53$.

\paragraph{Training.}
Every Qwen3 model is the post-trained release (Qwen3-0.6B to Qwen3-32B), and
every run trains and decodes with thinking disabled; Mistral-24B is
Mistral-Small-3.2-24B-Instruct-2506. All runs use response-masked causal language modeling with rsLoRA rank $16$,
$\alpha=16$, dropout $0.05$, and adapters on attention and multi-layer
perceptron (MLP) projections;
bf16, maximum sequence length $4{,}096$, one epoch, per-device batch size $8$,
gradient accumulation $2$, AdamW at $10^{-5}$, cosine decay, $3\%$ warm-up with a
$50$-step floor, weight decay $0.01$, gradient clipping at $1.0$. Each source in
a $d$-source run contributes $6000/d$ examples. Confirmation replicates vary
sampling and data order with the adapter initialization pinned; a separate
experiment also varies the adapter initialization and leaves every contrast
within noise.

\paragraph{Decoding and guardrails.}
Greedy decoding with task-specific token caps. A generation guardrail, the
degeneration check of the main text, flags a run when more than $5\%$ of answers
hit the cap, more than $10\%$ repeat fragments, or more than $20\%$ contain fewer
than three characters. It flags $11$ of the $124$ Qwen3-32B probe runs on the
neuroscience corpus, three of them on the \task{MHOP} column that the \task{MHOP}
fit reads; the Mistral-24B and PubMed probes have none. At the smaller sizes it
flags $18$, $8$, $9$ and $0$ of the $24$ probe runs at $0.6$B, $1.7$B, $4$B and
$8$B, mostly for cloze answers that hit the token cap; only one of these flags, a
\task{WHY} flag at $1.7$B, falls on a column that a reasoning-target fit reads. It
also flags $41$ runs in the order campaign. The reported fits retain all runs. Among the selector-comparison runs of Section~\ref{sec:buys}, four trip
guardrails: the single-best runs on Qwen/\task{MHOP}, Qwen/\task{ERR} and
Mistral/\task{MHOP}, and the Skill-It run on Qwen/\task{MHOP}; so do the
Qwen3-32B single-source confirmation runs for \task{MHOP} and \task{ERR}, both on
\task{CLZ}. Those specific runs serve as diagnostics and not as clean
comparisons.

\paragraph{Runs and cost.}
Counting each probe mixture of the three conditions once, the paper rests on $751$ fine-tuning runs:
$282$ probe mixtures in the three conditions, $96$ probe runs at the four smaller
sizes, $60$ confirmation runs at $32$B and $24$B (the $54$ of
Section~\ref{sec:predict} and the six single-source runs of
Figure~\ref{fig:selectors}b), $108$ confirmation runs at the smaller sizes, $70$
in the selector comparison, $99$ in the order experiment and $36$
adapter-initialization replicates. One run is one fine-tune of $6{,}000$ examples plus evaluation over
the full held-out split. The scheduler's accounting
(\texttt{data/runtimes/arm\_runtimes.json}) records $683$ completed runs and $540$
GPU-hours, including campaigns this paper does not report; it does not cover the
three probes or the order experiment. Its per-campaign mean wall clock runs
$0.47$--$1.22$ GPU-hours on a single H100 or H200; we exclude failed tasks,
because their wall clock measures a crash and not a run. At the Qwen3-32B rate of
$0.91$ GPU-hours per run, a probe of $94$ distinct mixtures costs roughly $85$
GPU-hours and a $24$-run probe roughly $22$; at the smaller sizes a $24$-run probe costs $11$ to
$16$. Across $60$ random draws of $24$ probe runs, the out-of-fold gain of
Section~\ref{sec:scope} on Qwen3-32B has a standard deviation of $0.33\pp$,
against $1.76\pp$ at $8$ runs.

\paragraph{Corpus hygiene.}
\task{MATCH} is one template repeated across its $852$ evaluation items and
behaves as a ceiling artifact, which is why Section~\ref{sec:scope} measures
transport over the $18$ non-\task{MATCH} entries. Of the unique validation
\task{DEF} questions, $11.4\%$ appear verbatim in training. No run we score for
\task{DEF} trains on \task{DEF}, so that overlap cannot reach a withheld target;
it does bound what the \task{DEF} column means as a source diagnostic. On the
neuroscience corpus the \task{CLZ} and \task{MATCH} training pools hold fewer than
$6{,}000$ examples, so their single-source runs repeat examples; the other six
pools are larger. The PubMed pools are much smaller, so every PubMed run repeats
examples of at least one source. The generator also repeats
some questions within a paper, in both the training and the evaluation split;
evaluation scores each repeat as a separate item, and the item counts above
include them.

\paragraph{Reproduction.}
\texttt{make analysis} and \texttt{make figures} recompute every table and figure
in this paper from a frozen evidence bundle; \texttt{make verify} and
\texttt{make verify-figures} confirm that a rerun reproduces the committed
artifacts exactly.

\section{The Scoring Rule, Audited}
\label{app:metric}

Recomputing every effect from retained per-item outputs reproduces all $54$
reported accuracies to within $7\times10^{-15}\pp$, so the reported effects are
the scoring rule applied to the stored generations and not a re-run.

The threshold-free variant is the mean of
$\max(s_{\mathrm{full}}, s_{\mathrm{avg}})$ with no threshold; under it all $8$ of
$8$ realized effects are positive, against $6$ of $8$ under
Equation~\ref{eq:metric}. Both cells that change sign are \task{WHY} cells
whose underlying similarity moved as the map predicted without enough items
crossing $\tau$. The recorded predictions are accuracies, so magnitudes
compare only after a through-origin rescaling. We fit that rescaling post hoc on
the same eight effects; after it, the predictions still beat the
mixture-agnostic baseline, $2.88$ against $6.03\pp$. Rescoring the same
generations at $\tau=0.80$ or $\tau=0.90$, with the recorded predictions
unchanged, gives MAE $2.81$ and $2.12\pp$ against $6.82$ and $3.82\pp$ for the
mixture-agnostic baseline ($2.31$ against $6.20\pp$ at $\tau=0.85$), so the
calibration is not an artifact of the chosen threshold.

The rule also interacts with output format. When the answer and the gold are
both one sentence, $s_{\mathrm{full}}=s_{\mathrm{avg}}$ and the item gets one test
against $\tau$; otherwise the two scores differ and it gets two. The paired runs
shift in sentence count, most sharply on Mistral/\task{ERR}, where $0.2\%$ of map-selected answers
run to more than one sentence against $94.4\%$ under the uniform mixture. A
format artifact would make the effect track that shift. Over the eight cells,
the correlation between the signed multi-sentence gap and the realized effect
is $-0.33$, too weak at $n=8$ to show or rule out an artifact. The largest effect, Mistral/\task{MHOP} at $+13.76\pp$, occurs
where the map-selected run is the more multi-sentence of the pair. Restricted
to the items a run answered in one sentence, which holds the answer's format
fixed, six of the eight effects keep their sign, and the two largest survive nearly
intact: $+11.44$ and $+13.22\pp$ against $+11.89$ and $+13.76\pp$.
Mistral/\task{ERR} falls from $+8.56$ to $-0.10\pp$ and PubMed/\task{MHOP} from
$+3.10$ to $-2.96\pp$; in both cells one run answers fewer than $6\%$ of items
in one sentence. The subsample conditions on a variable the treatment itself
moves, so it corroborates the effects without controlling for the shift.

Under the $s_{\mathrm{avg}}$ branch alone, seven of the eight effects keep their
sign: the two \task{ERR} effects grow, the Mistral/\task{MHOP} effect shrinks
from $+13.76$ to $+4.79\pp$, and PubMed/\task{MHOP} reverses. We do not treat that cell as established.

\end{document}